\title{Series of Hessian-Vector Products for Tractable Saddle-Free Newton Optimisation of Neural Networks}
\DeclareRobustCommand{\mplpoint}[1]{\tikz{\draw[color=#1, fill=#1] (0, 0) circle (0.5ex);}}
\DeclareRobustCommand{\mplline}[2][very thick]{\tikz[baseline]{\draw[color=#2, #1] (0, 0.5ex) -- +(3ex, 0);}}
\DeclareRobustCommand{\mplupdate}[1]{\tikz[baseline]{\draw[color=#1, fill=#1, very thick] (0, 0.5ex) edge[arrows={-Triangle[round]}] ++(2.5ex, 0) -- ++(3.5ex, 0) circle (0.5ex);}}
\DeclareMathOperator{\sign}{sign}
\renewcommand{\norm}[1]{\left\Vert #1 \right\Vert}
\newcommand{\limpower}{n}
\newcommand{\numseriesaccelerations}{N}
\newcommand{\numparams}{P}
\newcommand{\trainiter}{t}
\newtheorem{theorem}{Theorem}
\newtheorem{assumption}{Assumption}
\newlist{tightdescription}{description}{1}
\setlist[tightdescription]{itemsep=0.27ex}
\newcommand{\centered}[1]{\begin{tabular}{@{}l@{}} #1 \end{tabular}}
\author{%
  \name Elre T.\ Oldewage*
  \email etv21@cam.ac.uk \\
  \addr Department of Engineering \\
  University of Cambridge
  \AND
  \name Ross M.\ Clarke*
  \email rmc78@cam.ac.uk \\
  \addr Department of Engineering\\
  University of Cambridge
  \AND
  \name José Miguel Hernández-Lobato
  \email jmh233@cam.ac.uk \\
  \addr Department of Engineering \\
  University of Cambridge
  \AND
  \hfill\textmd{\emph{(*Equal contribution; randomly ordered)}}
}
\begin{document}
\maketitle
\def\abstractsquash{-2.5ex}

\let\AND\undefined

\vspace*{\abstractsquash}
\begin{abstract}
\vspace*{-1.5ex}
    Despite their popularity in the field of continuous optimisation, second-order quasi-Newton methods are challenging to apply in machine learning, as the Hessian matrix is intractably large. 
    This computational burden is exacerbated by the need to address non-convexity, for instance by modifying the Hessian's eigenvalues as in Saddle-Free Newton methods. 
    We propose an optimisation algorithm which addresses both of these concerns --- to our knowledge, the first efficiently-scalable optimisation algorithm to asymptotically use the exact inverse Hessian with absolute-value eigenvalues. Our method frames the problem as a series which principally square-roots and inverts the squared Hessian, then uses it to precondition a gradient vector, all without explicitly computing or eigendecomposing the Hessian. A truncation of this infinite series provides a new optimisation algorithm which is scalable and comparable to other first- and second-order optimisation methods in both runtime and optimisation performance. We demonstrate this in a variety of settings, including a ResNet-18 trained on CIFAR-10.
\end{abstract}
\vspace*{\abstractsquash}

\section{Introduction}
\vspace*{-1ex}
At the heart of many machine learning systems is an optimisation problem over some loss surface. In the field of continuous optimisation, second-order Newton methods are often preferred for their rapid convergence and curvature-aware updates. However, their implicit assumption of a (locally) convex space restricts their usability, requiring the use of mechanisms like damping \citep{martens_deep_2010, dauphin_identifying_2014, oleary-roseberry_low_2021} to avoid degenerate behaviour. In machine learning applications, which are invariably non-convex, high dimensionality further plagues this class of optimiser by creating intractably large Hessian (second-derivative) matrices and a proliferation of saddle points in the search space \citep{pascanu_saddle_2014}. These difficulties constrain most practical systems to first-order optimisation methods, such as stochastic gradient descent (SGD) and Adam.

\citet{pascanu_saddle_2014} and \citet{dauphin_identifying_2014} tackled some of these challenges by proposing Saddle-Free Newton (SFN) methods. In essence, they transform the Hessian by taking absolute values of each eigenvalue, which makes non-degenerate saddle points repel second-order optimisers where typically they would be attractive. Because this transformation would otherwise require an intractable eigendecomposition of the Hessian, they work with a low-rank Hessian approximation, on which this process is achievable, albeit at the cost of introducing an additional source of error.

In this paper, we propose a new route towards SFN optimisation which exploits Hessian-vector products to avoid explicitly handling the Hessian. We use a squaring and square-rooting procedure to take the absolute value of the eigenvalues without eigendecomposing the Hessian and deploy an infinite series to tractably approximate the expensive square-root and inverse operations. The resulting algorithm is comparable to existing methods in both runtime and optimisation performance, while tractably scaling to larger problems, even though it does not consistently outperform the widely-known Adam \citep{kingma_adam_2015} and KFAC \citep{martens_optimizing_2015}.
To our knowledge, this is the first approximate second-order approach to (implicitly) take the absolute value of the full Hessian matrix's eigenvalues and be exact in its untruncated form. After summarising previous work in Section~\ref{sec:RelatedWork}, we mathematically justify the asymptotic exactness of our algorithm in Section~\ref{sec:Derivations} and show its practical use in a range of applications in Section~\ref{sec:Experiments}. Section~\ref{sec:Conclusions} concludes the paper.

\section{Related Work}
\label{sec:RelatedWork}

Although stochastic first-order optimisation methods are the bread and butter of deep learning optimisation, considerable effort has been dedicated to \textit{preconditioned} gradient methods -- methods that compute a matrix which scales the gradient before performing an update step. Newton's method and quasi-Newton methods, which multiply the gradient by the Hessian or an approximation thereof, fall into this category. Other examples include AdaGrad \citep{duchi_adaptive_2011} which calculates a preconditioner  using the outer product of accumulated gradients, 
and SHAMPOO \citep{gupta_shampoo_2018} which is similar to Adagrad but maintains a separate, full preconditioner matrix for each dimension of the gradient tensor.

\citet{martens_deep_2010} proposes using \emph{Hessian-Free} (HF) or truncated Newton \citep{nocedal_numerical_2006} optimisation for deep learning. The algorithm uses finite differences to approximate the Hessian in combination with the linear conjugate gradient algorithm (CG) to compute the search direction. Like our method, HF implicitly works with the full Hessian matrix and is exact when CG converges.

\citet{pascanu_saddle_2014} and \citet{dauphin_identifying_2014} present the proliferation of saddle points in high-dimensional optimisation spaces as an explanation for poor convergence of first-order optimisation methods.
Various approaches to escaping these saddle points have been proposed. \citet{jin_how_2017} observe that saddle points are easy to escape by adding noise to the gradient step when near a saddle point, as indicated by a small gradient.
Another idea is to normalise the gradient so that progress is not inhibited near critical points due to diminishing gradients \citep{levy_power_2016, murray_revisiting_2019}.

Saddle points also present a hurdle to second order optimisation, since they become attractive when applying Newton's method. 
Nevertheless, some work leverages second order information in sophisticated ways to avoid saddle points. 
For example, \citet{curtis_exploiting_2019} exploit negative curvature information by alternating between classical gradient descent steps and steps in the most extreme direction of negative curvature.
 \citet{adolphs_non_2018} builds on this to propose ``extreme curvature exploitation'', where the eigenvectors corresponding to the most extreme positive and negative eigenvalues are added to the vanilla gradient update step. \citet{anandkumar_efficient_2016} develop an algorithm which finds stationary points with first, second \emph{and third} derivatives equal to zero, and show that progressing to a fourth-order optimality condition is NP-hard.
 \citet{truong_fast_2021} project the Newton update step onto subspaces constructed using the positive- and negative-curvature components of the Hessian, allowing them to negate the updates proposed by the latter. 

\citet{nesterov_cubic_2006} show that regularising Newton's method with the cubed update norm secures convergence to second-order stationary points, with further developments improving robustness to approximate Hessians \citep{tripuraneni_stochastic_2018}; considering approximate \citep{cartis_adaptive_2011} or efficient \citep{carmon_gradient_2019} solutions to the cubic-regularised sub-problem; introducing momentum \citep{wang_cubic_2020}; or extending the analysis to apply to mini-batched settings \citep{wang_stochastic_2019}. Limiting the parameter updates to some \emph{trust region} in which the Newton approximation can be relied upon provides an additional analytical perspective \citep{nocedal_numerical_2006}, with sub-solvers such as that of \citet{curtis_trust-region_2021} having appealing convergence behaviour. However, the latter note ``these methods have often been designed primarily with complexity guarantees in mind and, as a result, represent a departure from the algorithms that have proved to be the most effective in practice''.

\citet{pascanu_saddle_2014} propose the Nonconvex Newton Method, which constructs a preconditioner by decomposing the Hessian and altering it so that all eigenvalues are replaced with their absolute values and very small eigenvalues are replaced by a constant. Unfortunately, explicit decomposition of the Hessian is expensive and does not scale well to machine learning applications. 
\citet{dauphin_identifying_2014} extend this work by proposing the Saddle-Free Newton (SFN) method, which avoids computing and decomposing the exact Hessian by an approach similar to Krylov subspace descent \citep{vinyals_krylov_2012}, which finds $k$ vectors spanning the $k$ most dominant eigenvectors of the Hessian.
However, this approach relies on the Lanczos algorithm, which has historically been unstable \citep{cahill_numerical_2000, scott_how_1979} and requires careful implementation to avoid these issues \citep{saad_numerical_2011}. \citet{oleary-roseberry_low_2021} instead invert a low-rank approximation to the Hessian for improved stability. However, their method is susceptible to poor conditioning at initalisation and is limited to very small step sizes in settings with high stochasticity. Consequently, it is unclear how well the algorithm extends beyond the transfer learning settings illustrated.

Other saddle-avoiding approaches include detecting non-convexity by comparing optimisation performance to that which would be expected if the function were convex. \citet{carmon_convex_2017} do this by analysing the convergence of Nesterov-accelerated gradient descent, while \citet{royer_newton-cg_2020} combine the Conjugate Gradient method with damping to leverage non-convex update steps to the extent allowed by a trust region and \citep{liu_newton-mr_2023} follows non-convex directions identified by the Minimum-Residual method to escape indefinite regions.

Instead, our work writes the inverse of the squared and principal square-rooted Hessian as a series, of which we can compute a truncation without explicitly computing or eigendecomposing the Hessian, thereby avoiding certain instabilities faced by \citet{dauphin_identifying_2014} and \citet{oleary-roseberry_low_2021}. This can be seen as an extension of \citet{agarwal_second-order_2017}'s use of a Neumann series to approximate the product of the inverse Hessian and a gradient vector.
 
There are other examples in machine learning where infinite series are used to motivate approximations to the inverse Hessian \citep{lorraine_optimizing_2020, clarke_scalable_2022}; we exploit the same construction as \citet{song_fast_2021} to compute the square root of a matrix.

An alternative approach is to precondition the gradient with a curvature matrix that is positive semi-definite by definition, thereby circumventing concerns surrounding saddle points. Notably, the natural gradient method \citep{amari_natural_1998} preconditions the gradient with the inverse Fisher information matrix, rather than the inverse Hessian. Whereas the Hessian measures curvature in the model parameters, the Fisher quantifies curvature in terms of the KL-divergence between model and data probability distributions.
 The natural gradient can be approximated by methods like Factorized Natural Gradient \citep{grosse_scaling_2015} and Kronecker-Factored Approximate Curvature (KFAC) \citep{martens_optimizing_2015}. 
 In particular, KFAC approximates the Fisher with a block diagonal matrix, which significantly reduces the memory footprint and reduces the cost of inversion. KFAC also leverages several other ``tricks'', which are relevant for later discussion. We provide a brief overview below and further details in Appendix~\ref{app:KFACAdaptivity}:
 \begin{tightdescription}
 \item[Moving average of curvature matrix] KFAC maintains an online, exponentially-decaying average of the approximate curvature matrix, which improves its approximation thereof and makes the method more robust to stochasticity in mini-batches.
\item[Adaptive learning rate and momentum factor] KFAC's update rule incorporates a learning rate and a momentum factor which are both computed adaptively by assuming a locally quadratic model and solving for the local model's optimal learning rate and momentum factor at every iteration.
\item[Tikhonov damping with Levenberg-Marquardt style adaptation.]
KFAC incorporates two damping terms: $\eta$ for weight regularisation, and $\lambda$ which is adapted throughout training using Levenberg-Marquardt style updates \citep{more_levenberg-marquardt_1978}. The damping constant $\lambda$ can be interpreted as defining a trust region for the update step. When the curvature matrix matches the observed landscape, the trust region is grown by shrinking $\lambda$; otherwise damping is increased so that optimisation becomes more SGD-like.
\end{tightdescription}

KFAC is arguably the most popular second-order method enjoying widespread use in practice, so we include it as an important baseline in Section~\ref{sec:Experiments}. Section~\ref{hessianseries:experiments:uci} describes our studies incorporating similar adaptive mechanisms into our method, which we now proceed to derive.

\section{Derivations}
\label{sec:Derivations}
Suppose we wish to minimise some scalar function $f(\vec{x})$ over the vector quantities $\vec{x}$, which have some optimal value $\vec{x}^*$. Denote by $\vec{g} = \nabla_{\vec{x}} f = \nabla f(\vec{x})$ and $\vec{H} = \nabla_{\vec{x}} (\nabla_{\vec{x}} f)\trans$ the gradient vector and Hessian matrix of $f$, respectively, with both quantities evaluated at the present solution $\vec{x}$. We make no assumptions about the convexity of $f(\vec{x})$.

\subsection{Preliminaries}
Under a classical Newton framework, we can approximate a stationary point $\vec{x}^*$ by writing a second-order Taylor series for perturbations around some $\vec{x}$. Assuming $\vec{H}$ is invertible, this recovers
\begin{equation}
    \vec{x}^* \approx \vec{x} - \vec{H}^{-1} \vec{g},
\end{equation}
where the RHS is the Newton update to $\vec{x}$. In effect, we have locally approximated $f$ about $\vec{x}$ by a quadratic function, then set $\vec{x}$ to the stationary point of this quadratic. The invertibility of $\vec{H}$ guarantees that this stationary point is unique. However, if $\vec{H}$ is not positive definite --- for instance, if the function is locally non-convex --- that stationary point may be a maximum or saddle point of the approximated space, rather than a minimum.

To address this limitation, we might consider the eigendecomposition of $\vec{H}$. Since $\vec{H}$ is real and symmetric for non-degenerate loss functions, its eigenvalues are real and its eigenvectors may be chosen to be orthonormal. We can interpret the eigenvectors as the `principal directions of convexity', and the eigenvalues as the corresponding magnitudes of convexity in each direction (where negative eigenvalues encode concavity). As $\vec{H}^{-1}$ has equal eigenvectors to $\vec{H}$ and reciprocal eigenvalues, we may interpret the product $\vec{H}^{-1} \vec{g}$ as a transformation of the gradient vector, with anisotropic scaling governed by the directions and magnitudes of convexity in $\vec{H}$. Moreover, this product gives \emph{exactly} the updates necessary to move along each principal direction of convexity to the stationary value in that direction, according to the locally quadratic approximation implied by $\vec{H}$. This is illustrated in Figure~\ref{fig:ComparisonGraphics}.

As positive eigenvalues are associated with directions of convex curvature, $\vec{H}^{-1}$ selects updates in these directions which \emph{decrease} the loss function. Conversely, $\vec{H}^{-1}$ selects updates which \emph{increase} the loss function in the directions associated with negative eigenvalues --- directly opposing our goal of minimising $f$. Intuitively, we would like to reverse the direction of the latter updates, such that they are decreasing $f$. This is equivalent to changing the sign of the corresponding eigenvalues.

This intuitive idea was presented by \citet{pascanu_saddle_2014}, and \citet{dauphin_identifying_2014} establish a more direct derivation using a trust region framework, which motivates taking the absolute value of every eigenvalue in a \emph{Saddle-Free Newton} (SFN) method (Figure~\ref{fig:ComparisonGraphics}). However, its implementation in deep learning is challenged by the intractably large Hessian matrices of non-trivial neural networks. Previous work \citep{dauphin_identifying_2014, oleary-roseberry_low_2021} tackles this by computing a low-rank approximate Hessian, whose eigendecomposition may be calculated directly, changed as required and approximately inverted. While such an approach secures tractability, the cascade of approximations threatens overall accuracy.

\begin{figure}
    \centering
    \includegraphics[width=0.45\linewidth]{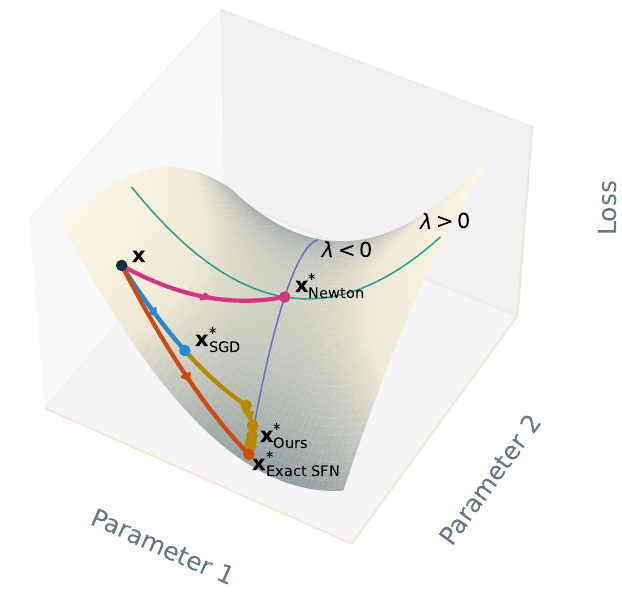}
    \hfill
    \includegraphics[width=0.45\linewidth]{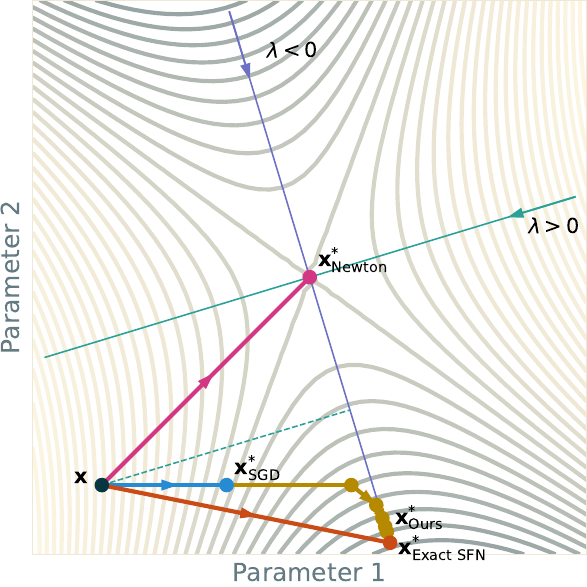}
    \caption{Motivation for Saddle-Free Newton methods. This locally quadratic surface has a saddle point (\mplpoint{solarizedMagenta}) and its Hessian gives two principal directions of curvature (\mplline[thin]{solarizedViolet}, \mplline[thin]{solarizedCyan}). From any initial point (\mplpoint{solarizedBase02}), SGD will give an update neglecting curvature (\mplupdate{solarizedBlue}) and Newton's method converges immediately to the saddle point (\mplupdate{solarizedMagenta}). Exact Saddle-Free Newton (\mplupdate{solarizedOrange}) takes absolute values of the Hessian eigenvalues, negating the components of the Newton update in concave directions (\mplline[thin]{solarizedViolet}) and thus changing the saddle point from an attractor to a repeller. Our series-based method (\mplupdate{solarizedYellow}) is an approximate Saddle-Free Newton algorithm which converges to the exact Saddle-Free Newton result.}
    \label{fig:ComparisonGraphics}
\end{figure}

\subsection{Absolute Values as Square-Rooted Squares}
Our proposed method seeks to transform the eigenvalues of $\vec{H}$ without computing its full eigendecomposition. This approach is inspired by the observation that, for scalar $x$, $|x| = +\sqrt{x^2}$, where we specifically take the positive square root.
In the matrix case, we may define $\vec{S}$ as a square root of a square matrix $\vec{A}$ iff $\vec{A} = \vec{S} \vec{S}$. For a square, positive semi-definite $\vec{A}$, there is a unique positive semi-definite square root $\vec{B}$, which we term the \emph{principal} square root of $\vec{A}$; we will write $\vec{B} = \sqrt[+]{\vec{A}}$.

If $\vec{A}$ is real and symmetric, we may eigendecompose it as $\vec{Q} \vec{\Lambda} \vec{Q}\trans$, where $\vec{Q}$ is the orthonormal matrix whose columns are the eigenvectors of $\vec{A}$ and $\vec{\Lambda}$ the diagonal matrix whose elements are the corresponding eigenvalues of $\vec{A}$. Then, we have $\vec{B} = \vec{Q} \vec{\Lambda}^\frac{1}{2} \vec{Q}\trans$. Since raising the diagonal matrix $\vec{\Lambda}$ to the $k$th power is equivalent to raising each diagonal element to the $k$th power, $\vec{B}$ has the same eigenvectors as $\vec{A}$, but the eigenvalues of $\vec{B}$ are the square roots of those of $\vec{A}$. By taking the principal square root, we guarantee that all the eigenvalues of $\vec{B}$ are non-negative, hence we have taken the positive square root of each eigenvalue in turn.

This reveals a route to transforming our Hessian $\vec{H}$ by taking the absolute value of its eigenvalues. Consider the eigendecomposition $\vec{H} = \vec{Q} \vec{\Lambda} \vec{Q}\trans$, noting that $\vec{H}^2 = \vec{Q} \vec{\Lambda}^2 \vec{Q}\trans$ is positive semi-definite by construction, as its eigenvalues are squares of real numbers. But then $\sqrt[+]{\vec{H}^2}$ is the unique positive semi-definite square root of $\vec{H}^2$, and each eigenvalue of $\sqrt[+]{\vec{H}^2}$ is the positive square root of the square of the corresponding eigenvalue of $\vec{H}$ --- equivalently, its absolute value. Thus, we may take the absolute value of each eigenvalue of $\vec{H}$ by computing the square and then taking the principal square root of $\vec{H}$, i.e. by computing $\sqrt[+]{\vec{H}^2}$.

\subsection{Inverse Square Root Series}
To use this transformed $\vec{H}$ as a second-order preconditioner, we must also invert it, so the matrix of interest is $\left( \sqrt[+]{\vec{H}^2} \right)^{-1}$. We now develop a series approximation to this quantity. For scalars $z$, we may exploit the generalised binomial theorem to write
\begin{align}
    (1 - z)^{-\frac{1}{2}} 
    &= \sum_{k=0}^\infty \frac{1}{2^{2k}} {2k \choose k} z^k
\end{align}
Applying the root test for convergence, a sufficient condition for the convergence of this series is $\limsup_{\limpower \to \infty} |z^\limpower|^\frac{1}{\limpower} < 1$.
We generalise this series to the matrix case by replacing the absolute value $|\cdot|$ with any compatible sub-multiplicative matrix norm $\norm{\cdot}$ and writing $\vec{I} - \vec{Z}$ in place of $1-z$. Ideally, we would set $\vec{Z} = \vec{I} - \vec{H}^2$ and recover a power series directly, but to ensure convergence we will require a scaling factor $V$ such that $\vec{Z} = \vec{I} - \frac{1}{V} \vec{H}^2$. With this addition, we have
\begin{equation}
    \label{eq:MatrixSeries}
    (\vec{H}^2)^{-\frac{1}{2}} = \frac{1}{\sqrt{V}} \sum_{k=0}^{\infty} \frac{1}{2^{2k}} {2k \choose k} \left( \vec{I} - \frac{1}{V} \vec{H}^2 \right)^k.
\end{equation}

For this matrix series to converge, we require $\limsup_{\limpower \to \infty} \lVert \left( \vec{I} - \frac{1}{V} \vec{H}^2 \right)^\limpower \rVert^\frac{1}{\limpower} < 1$. By Gelfand's formula, this limit superior is simply the spectral radius of $\vec{I} - \frac{1}{V} \vec{H}^2$ which, this being a real symmetric matrix, is exactly the largest of the absolute value of its eigenvalues. 
Denoting the largest-magnitude eigenvalue of $\vec{H}^2$ by $\lambda_\mathrm{max}$, our convergence condition is thus equivalent to $V > \frac{1}{2} \lambda_\mathrm{max}$. 
Further, if we strengthen the bound to $V > \lambda_\mathrm{max}$, we have that $\left( \vec{I} - \frac{1}{V} \vec{H}^2 \right)^k$ is positive semi-definite for $k=0,1,2,\cdots$, so our series, regardless of where it is truncated, produces a positive semi-definite matrix. We are thus guaranteed to be asymptotically targeting the principal square root.
Since $\norm{\vec{H}^2} \geq \lambda_\mathrm{max}$ for any sub-multiplicative norm $\norm{\cdot}$, a more practical bound is $V > \norm{\vec{H}^2}$. See Appendix~\ref{app:AlgorithmAnalysis} for further analysis of the correctness, convergence and behaviour around critical points of this series.

\subsection{Hessian Products, Choice of $V$ and Series Acceleration}
\label{ref:MiscDerivations}
Although we have avoided directly inverting or square-rooting a Hessian-sized matrix, explicitly computing this series remains intractable. Instead, recall that our quantity of interest for second-order optimisation is $\left( \sqrt[+]{\vec{H}^2} \right)^{-1} \vec{g}$, and consider the series obtained by multiplying \eqref{eq:MatrixSeries} by $\vec{g}$:
\begin{equation}
    \label{eq:VectorSeries}
    (\vec{H}^2)^{-\frac{1}{2}} \vec{g} = \frac{1}{\sqrt{V}} \sum_{k=0}^{\infty} \frac{1}{2^{2k}} {2k \choose k} \left( \vec{I} - \frac{1}{V} \vec{H}^2 \right)^k \vec{g}.
\end{equation}

Denoting by $\vec{a}_k$ the $k$th term of this summation, we have $\vec{a}_0 = \frac{1}{\sqrt{V}} \vec{g}$ and $\vec{a}_{k} = \frac{2k(2k-1)}{4k^2} \left( \vec{a}_{k-1} - \frac{1}{V} \vec{H} \vec{H} \vec{a}_{k-1} \right)$. With two applications of the Hessian-vector product trick \citep{pearlmutter_fast_1994}, we can compute $\vec{H} \vec{H} \vec{a}_{k-1}$ at the cost of two additional forward and backward passes through the model --- a cost vastly smaller than that of storing, manipulating and inverting the full Hessian. By unrolling this recursion, we can thus efficiently compute the summation of a finite number of the $\vec{a}_k$.

Under this framework, we have ready access to the product $\vec{H}^2 \vec{g}$, so can use the loose adaptive heuristic $V \geq \frac{\norm{\vec{H}^2 \vec{g}}}{\norm{\vec{g}}}$, which we found to be the most performant strategy for adapting $V$.

In practice, we found \eqref{eq:VectorSeries} to converge slowly, and thus benefit from series acceleration. From a variety of strategies, we found the most successful to be a modification due to \citet{sablonniere_comparison_1991} of Wynn's $\epsilon$-algorithm \citep{wynn_device_1956}. Letting $\vec{s}_m$ be the $m$th partial sum of \eqref{eq:VectorSeries}, the algorithm defines the following recursion:
\begin{equation}
    \vec{\epsilon}_m^{(-1)} = 0, \qquad
    \vec{\epsilon}_m^{(0)} = \vec{s}_m, \qquad
    \vec{\epsilon}_m^{(c)} = \vec{\epsilon}_{m+1}^{(c-2)} + \left( \left\lfloor \frac{c}{2} \right\rfloor + 1 \right) \left( \vec{\epsilon}_{m+1}^{(c-1)} - \vec{\epsilon}_m^{(c-1)} \right)^{-1}.
\end{equation}
We employ the Samelson  vector inverse $\vec{a}^{-1} = \frac{\vec{a}}{\vec{a}\trans \vec{a}}$ as suggested by \citet{wynn_acceleration_1962}. Using these definitions, the sequence $\vec{\epsilon}_{m}^{(2l)}$ for $m=0,1,2,\cdots$ is the sequence of partial sums of the series $\vec{a}_k$ accelerated $l$ times. Thus, we expect the most accurate approximation of \eqref{eq:VectorSeries} to be given by maximising $l$ and $m$, acknowledging there is a corresponding increase in computational cost. Pseudo-code for series acceleration is provided in Appendix~\ref{app:SeriesAcceleration}.

Algorithm~\ref{alg:Training} incorporates all these elements to form a complete neural network optimisation algorithm. While expanding the series of \eqref{eq:VectorSeries} to a large number of terms may be arbitrarily expensive, we show in the next Section that useful progress can be made on tractable timescales.

\begin{algorithm}
\caption{Series of Hessian-Vector Products for Tractable Saddle-Free Newton Optimisation}
\label{alg:Training}
\begin{algorithmic}
    \WHILE{training continues}
        \STATE Compute training loss, gradient $\vec{g}$, Hessian $\vec{H}$
        \STATE $V \gets \max \left\{ V, \frac{\norm{\vec{H}^2 \vec{g}}}{\norm{\vec{g}}} \right\}$
        \STATE $\vec{a}_0, \vec{s}_0 \gets \vec{g}$
        \FOR{$k \gets 1$ \TO $K-1$}
            \STATE $\vec{a}_k \gets \frac{2k(2k-1)}{4k^2} \left( \vec{a}_{k-1} - \frac{1}{V} \vec{H} \vec{H} \vec{a}_{k-1} \right)$
            \STATE $\vec{s}_k \gets \vec{s}_{k-1} + \vec{a}_k$
        \ENDFOR
        \STATE Compute final term $\hat{\vec{s}}_\infty$ after $\numseriesaccelerations$ accelerations of the series $\vec{s}_{K-1-2\numseriesaccelerations}, \vec{s}_{K-2\numseriesaccelerations}, \cdots, \vec{s}_{K-1}$
        \STATE \qquad (See Algorithm~\ref{alg:SeriesAcceleration} in Appendix~\ref{app:SeriesAcceleration})
        \STATE $\vec{w} \gets \vec{w} - \frac{\eta}{\sqrt{V}} \hat{\vec{s}}_\infty$
    \ENDWHILE
\end{algorithmic}
\end{algorithm}

\section{Experiments}
\label{sec:Experiments}

We now move on to empirical evaluation of our algorithm. For all experiments, we use ASHA \citep{li_system_2020} to tune each algorithm and dataset combination on the validation loss, sampling 100 random hyperparameter configurations and setting the maximum available budget based on the model and data combination. Further experimental details and the final hyperparameter settings for all experiments can be found in Appendix~\ref{app:Hyperparameters}, with code available at \url{https://github.com/rmclarke/SeriesOfHessianVectorProducts}.

We will begin by considering UCI~Energy \citep{tsanas_accurate_2012}, which is small enough to allow an exact implementation of our algorithm (using eigendecompositions instead of the Neumann series approximation) as a proof of concept, and lends itself to the full-batch setting --- the best case scenario for second-order methods. We then move to a setting without these conveniences, namely Fashion-MNIST \citep{xiao_fashion-mnist_2017}, which is large enough to require require mini-batching and has too many parameters to allow for exact computation of the Hessian. 
We go on to increasingly difficult scenarios, in terms of both model and dataset size, by considering SVHN \citep{netzer_reading_2011} and CIFAR-10 \citep{krizhevsky_learning_2009} using ResNet-18 architectures.
%

For UCI~Energy, we generate a random dataset split using the same sizes as \citet{gal_dropout_2016}; 
for Fashion-MNIST, SVHN and CIFAR-10 we separate the standard test set and randomly choose $\frac{1}{6}$, $\frac{1}{6}$ and $\frac{1}{10}$ (respectively) of the remaining data to form the validation set. 
The numerical data for all experiments can be found in Appendix~\ref{app:TabularResults}. While we will usually present wall-clock time on the $x$-axis, plots with iteration steps on the $x$-axis are available in Appendix~\ref{app:iteration_steps}.


For all experiments, we present both training and test loss. The optimisation literature often focuses only on the objective function at hand, i.e.~the training loss, since a strong optimiser should be able to solve the function it is given. However, in machine learning our target is always to generalise, i.e.~to do well on the unseen test set as a measure of generalisation, and the training loss is only a means toward this end.  Since we hope to apply our method to deep learning methods, we consider it important to present both these metrics together.

\subsection{UCI~Energy}
\label{hessianseries:experiments:uci}
We begin with a small-scale experiment on UCI~Energy as a proof of concept, training for 6\,000 full-batch training epochs. We compare our algorithm to a number of baselines\footnote{We also include an L-BFGS \citep{liu_limited_1989} baseline in Appendix~\ref{app:lbfgs}}:
\begin{tightdescription}
    \item[Exact SFN] Full-Hessian implementation of the absolute-value eigenvalue strategy of \citet{pascanu_saddle_2014}, where we compute the eigenvalue decomposition and take the absolute value of the eigenvalues. We additionally replace eigenvalues near zero with a small constant and then compute the exact inverse of the resulting saddle-free Hessian. For this method, we tune the learning rate, momentum, the threshold for replacing small eigenvalues, and constant which replaces the small eigenvalues.
    \item[Ours] Our implementation of Algorithm~\ref{alg:Training}, using tuned learning rate, momentum, series length $K$ and order of acceleration $N$. 
    As described in Section~\ref{ref:MiscDerivations}, we adapt $V$ using the loose bound 
    $V \geq \frac{\norm{\mathbf{H}^2 \mathbf{g}}}{\norm{\mathbf{g}}}$
    starting with an initial value of 100, as we found minimal benefit to explicitly tuning $V$. Noting the inherent instability of attempting to invert near-zero eigenvalues, we add a fixed damping coefficient to the Hessian at each use, which we treat as another hyperparameter to optimise. We also considered more accurate approximations to $V$ that would attain a tighter bound (such as computing the largest eigenvalue using power iteration), but found these held little to no benefit.
    \item[SGD] Classical stochastic gradient descent, with a tuned learning rate.
    \item[Adam] \citep{kingma_adam_2015} We tune all the parameters, i.e.~learning rate, $\epsilon$, $\beta_1$ and $\beta_2$.
    \item[KFAC (DeepMind)] \citep{martens_optimizing_2015} We use the implementation of \citet{botev_kfac-jax_2022} which includes adaptive learning rates, momentum, and damping; we tune the initial damping.
\end{tightdescription}
The first algorithm above is an exact version of our algorithm, which is tractable in this particular setting. We also considered including an exact implementation of the Newton second-order update but this diverged rapidly, presumably due to the non-convexity of the optimisation task, so we do not include it here. 

\begin{figure}[h]
    \centering
    \includegraphics[width=0.47\textwidth]{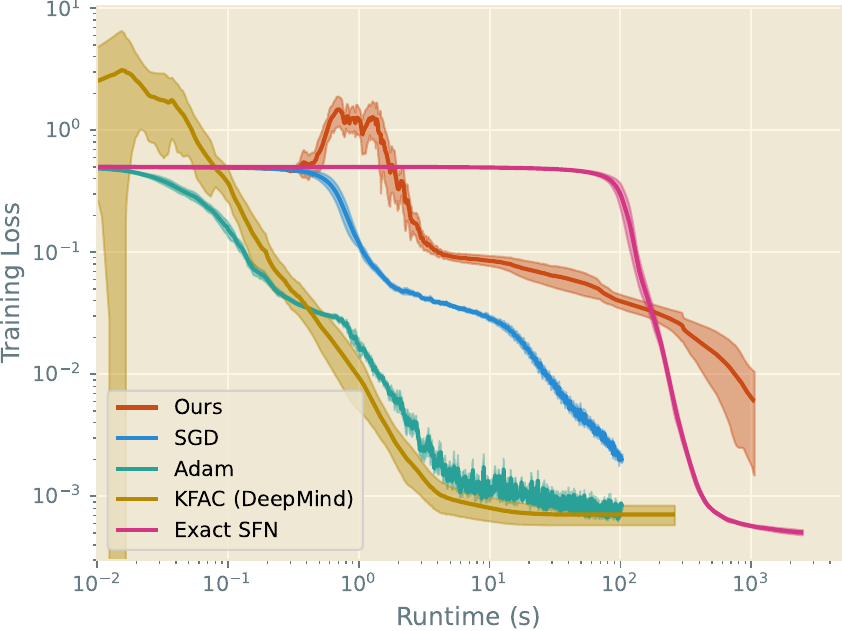}
    \hfill
    \includegraphics[width=0.47\textwidth]{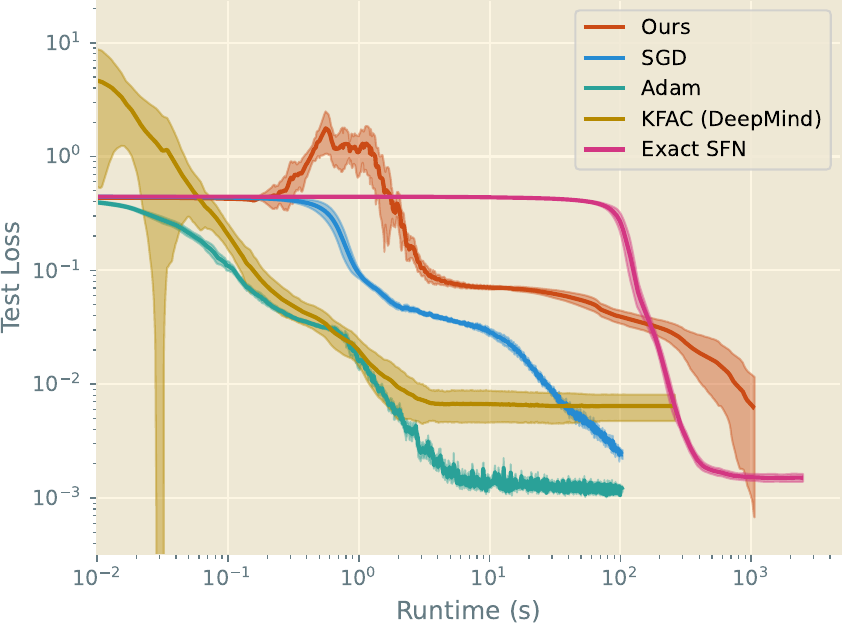}
    \hfill
    \includegraphics[width=0.47\textwidth]{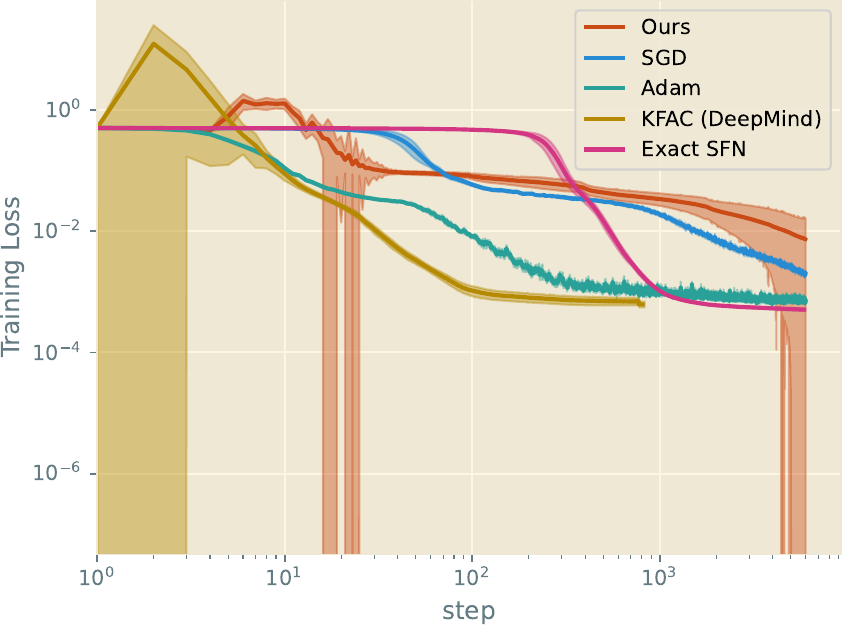}
    \hfill
    \includegraphics[width=0.47\textwidth]{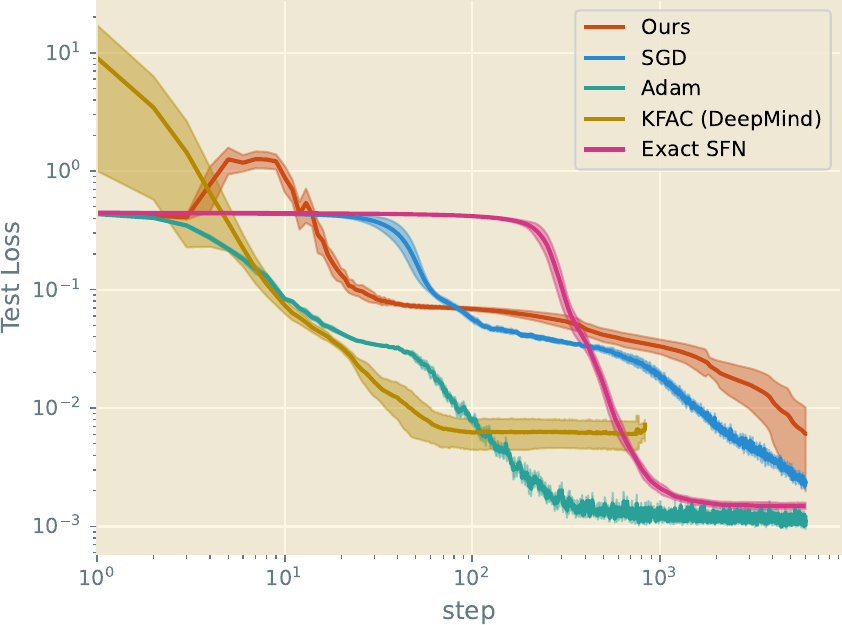}
    \caption[Training and test loss on UCI Energy using Exact SFN, Ours, SGD, Adam, and KFAC (DeepMind)]{Median training (left) and test (right) MSEs achieved over wall-clock time (top) and training iterations (bottom) on UCI~Energy by various optimisers in the full-batch setting, bootstrap-sampled from 50 random seeds. Optimal hyperparameters were tuned with ASHA. Note the logarithmic horizontal axes.}
    \label{fig:uci_energy_profiles_and_rankings_subset}
\end{figure}

Figure~\ref{fig:uci_energy_profiles_and_rankings_subset} shows the training and test losses both in terms of wall-clock time and as a function of the number of optimisation steps.
%
Exact~SFN achieves the best training loss, as we may hope from it being an exact SFN method. This is encouraging, since it provides proof of concept that our approach is generally sensible in the exact setting. However, it does not converge as quickly as may be desired --- in comparison, KFAC~(DeepMind) and Adam make much faster progress, even when considering the change in loss per iteration, rather than wall-time. Our algorithm deflects from the Exact SFN trend, as its approximate nature would suggest, but does not approach the performance exhibited by KFAC~DeepMind.

KFAC~DeepMind includes clever adaptation mechanisms and smoothing of the curvature matrix which may give it an advantage over the other algorithms. To investigate this, we include additional variants on the baselines:
\begin{tightdescription}
    \item [Exact SFN (Adaptive)] Same as Exact SFN, but with adaptive learning rate, momentum and damping strategies as used by KFAC~(DeepMind) (see Section~\ref{sec:RelatedWork} and Appendix~\ref{app:KFACAdaptivity} for details), as well as an exponential moving average of the curvature matrix. We tune only the initial damping, which subsumes the need for manually replacing small eigenvalues with a constant.
    \item[Ours (Adaptive)] Our implementation of Algorithm~\ref{alg:Training},  incorporating the adaptive learning rate, momentum and damping used by KFAC~(DeepMind). We tune the initial damping, number of update steps and order of acceleration.
    \item[KFAC (Kazuki)] \citep{martens_optimizing_2015} This corresponds to the default settings for KFAC in \citet{osawa_asdfghjkl_nodate}. This version of KFAC is not adaptive and does not smooth the curvature matrix by means of averaging. We tune the damping, learning rate and momentum.
\end{tightdescription}

Figure~\ref{fig:uci_energy_profiles_and_rankings} shows the training and test loss profiles in wall-clock time. 
The best test and training losses are now achieved by Exact~SFN~(Adaptive). We note that this adaptive version of Exact~SFN converges considerably faster than the non-adaptive version, reinforcing our and \citeauthor{martens_optimizing_2015}'s views on the importance of adapting the learning rate, momentum and damping.

In all cases (KFAC, Exact~SFN and Ours), the adaptive version of the algorithm performs significantly better than the non-adaptive version. 
Although our adaptive algorithm matches Exact~SFN and beats SGD and both KFAC versions in terms of final test loss, it is still surpassed by Adam and SFN~Exact~(Adaptive). Notably, our non-adaptive algorithm does not match the performance of SFN~Exact, neither does our adaptive algorithm match SFN~Exact~(Adaptive). 
Clearly, we sacrifice training performance by using an approximation to Exact~SFN and by not smoothing the curvature matrix.


KFAC~(DeepMind) achieves the second best training loss, though not test loss. 
KFAC (Kazuki) diverges quickly at the start of training, which is also unexpected given that its hyperparameters were tuned and that it behaves reasonably on the later, more difficult problems. We hypothesise that adaptive parameters are an important component of its behaviour and that this setting does not lend itself well to fixed parameters (which is supported by the observation that all the adaptive versions performed better than their non-adaptive counterparts).  

In this setting, it seems that short of using exact Hessians, Adam is the best choice of optimiser, displaying the second-best training and test losses and completing faster (in wall-clock time) than the second-order methods. However, we are encouraged that our adaptive algorithm's performance is not far off the exact version and continue to more realistic settings in the sections that follow.

\begin{figure}
    \centering
    \includegraphics[width=0.47\textwidth]{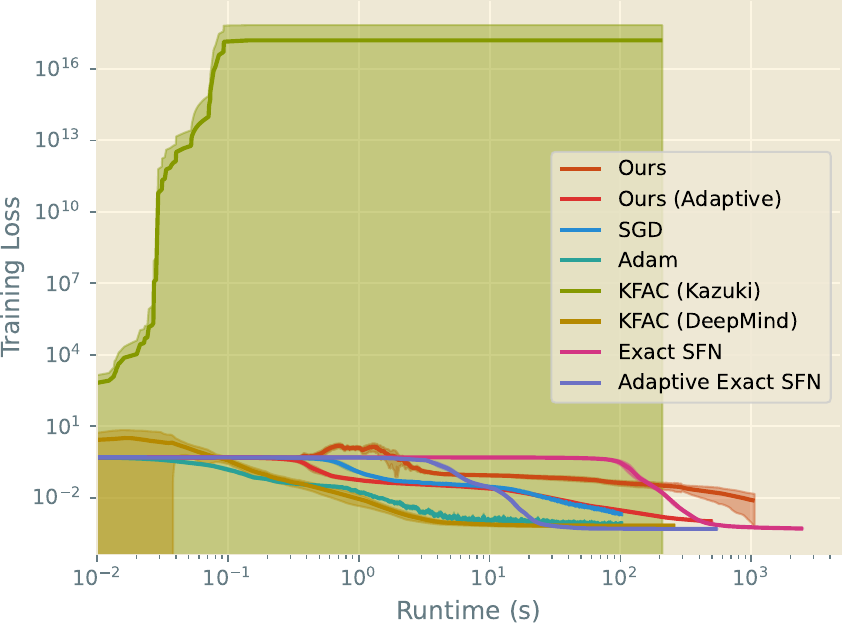}
    \hfill
    \includegraphics[width=0.47\textwidth]{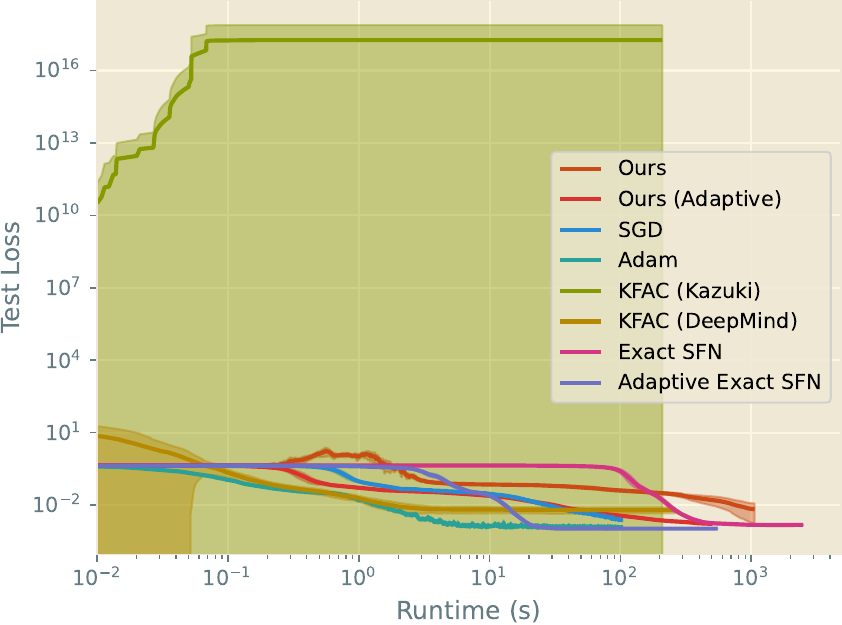}
    \hfill
    \includegraphics[width=0.47\textwidth]{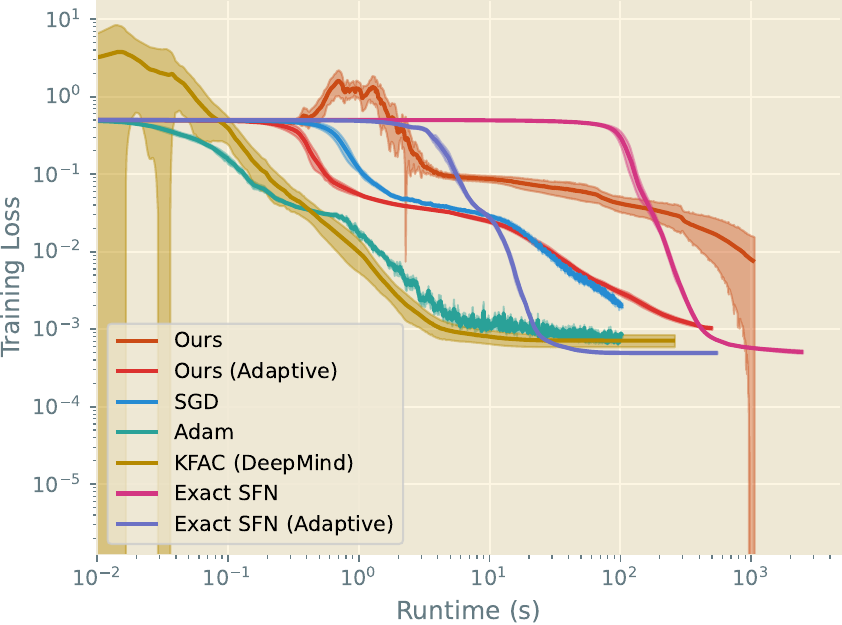}
    \hfill
    \includegraphics[width=0.47\textwidth]{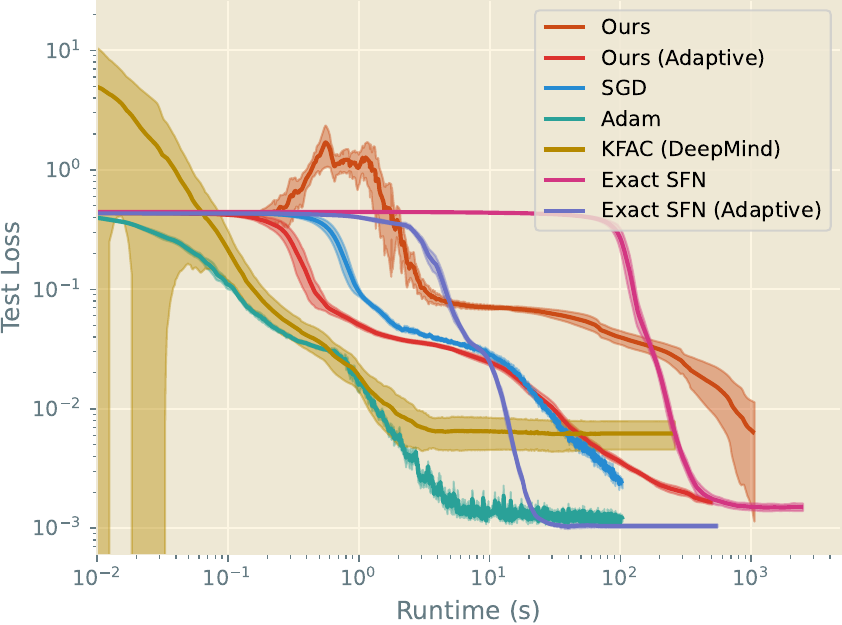}
    \caption[Training and test loss on UCI Energy including Exact SFN (Adaptive), Ours (Adaptive), and KFAC (Kazuki)]{Median training (left) and test (right) MSEs plotted against the log of wall-clock time. The top row includes all additional optimisers; the bottom row excludes KFAC (Kazuki) for clarity. Results are on UCI~Energy in the full-batch setting and are bootstrap-sampled from 50 random seeds. Optimal hyperparameters were tuned with ASHA. }
    \label{fig:uci_energy_profiles_and_rankings}
\end{figure}

\subsection{Larger Scale Experiments}
Most practical applications are too large to permit full-batch training and so the remainder of our experiments incorporate mini-batching. Since second-order methods may benefit from larger batch sizes, we tune for batch size, choosing from the set $\{50, 100, 200, 400, 800, 1600, 3200\}$. 

We show the best (lowest) losses achieved by each algorithm in each problem setting in Figure~\ref{fig:rankings} as well as the training and test loss profiles in Figure~\ref{fig:fashion_and_svhn_losses}. Although KFAC~(DeepMind) usually attains the best training loss, there is no clear consistent winner in terms of the best test loss achieved across all problems, despite each algorithm having been tuned specifically for each problem.

\begin{figure}
    \centering
    \includegraphics[width=0.45\textwidth]{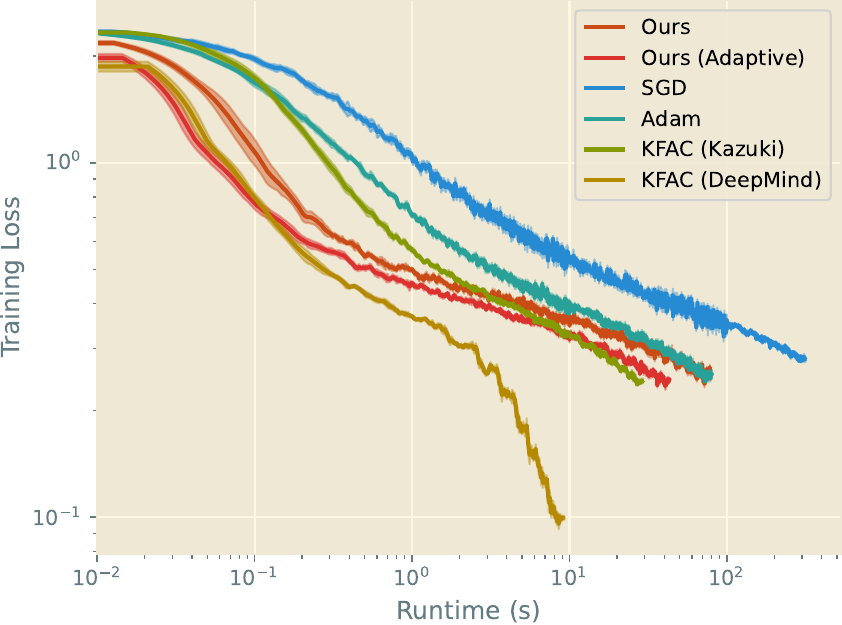}
    \hfill
    \includegraphics[width=0.45\textwidth]{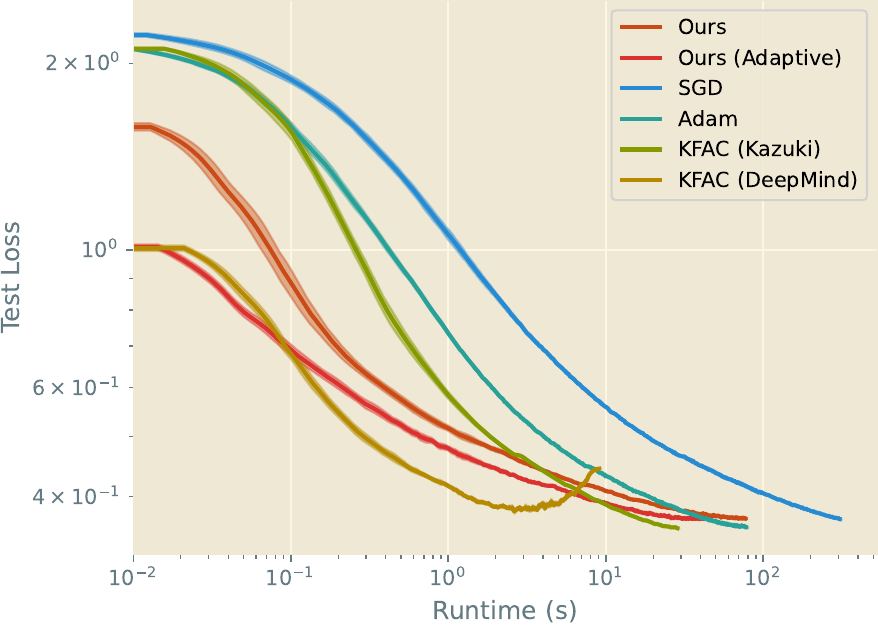}
    \hfill
    \includegraphics[width=0.45\textwidth]{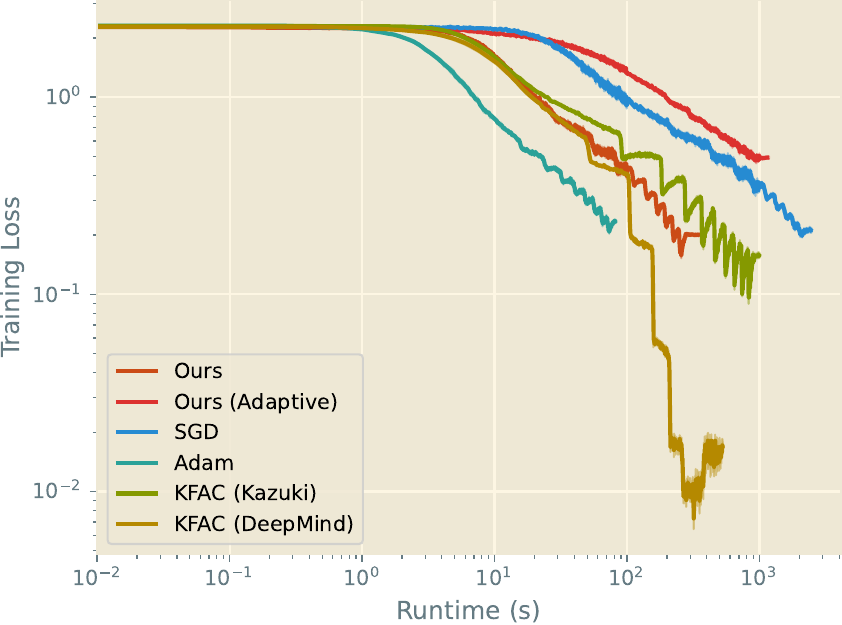}
    \hfill
    \includegraphics[width=0.45\textwidth]{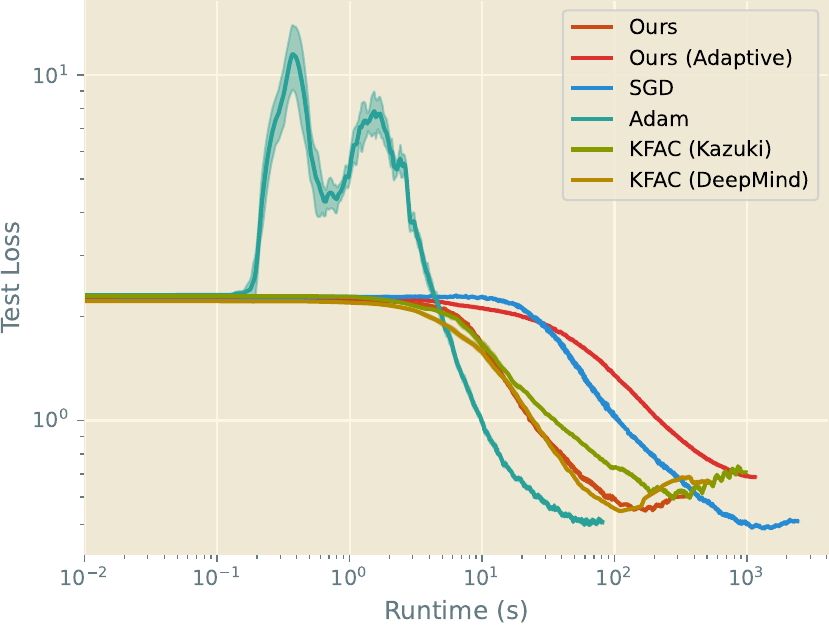}
    \hfill
    \includegraphics[width=0.45\textwidth]{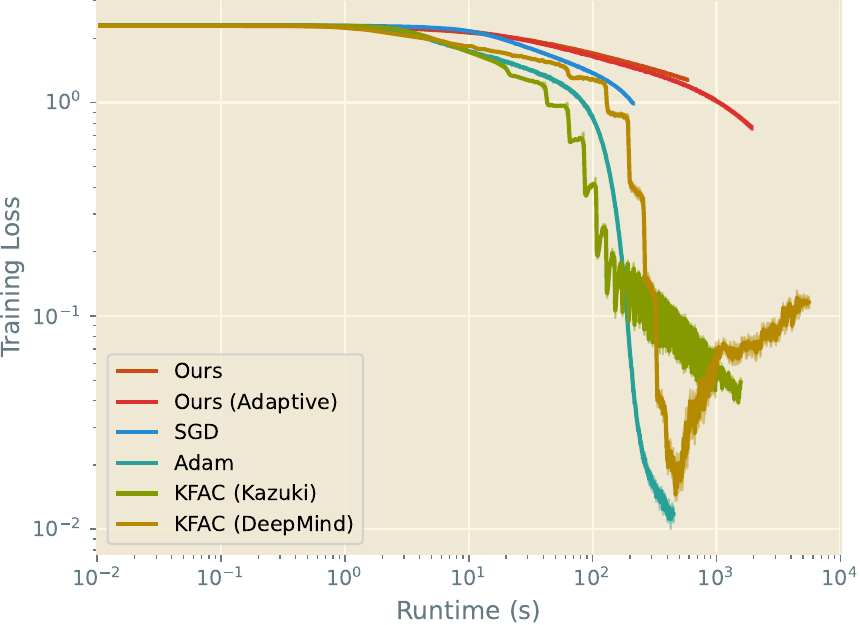}
    \hfill
    \includegraphics[width=0.45\textwidth]{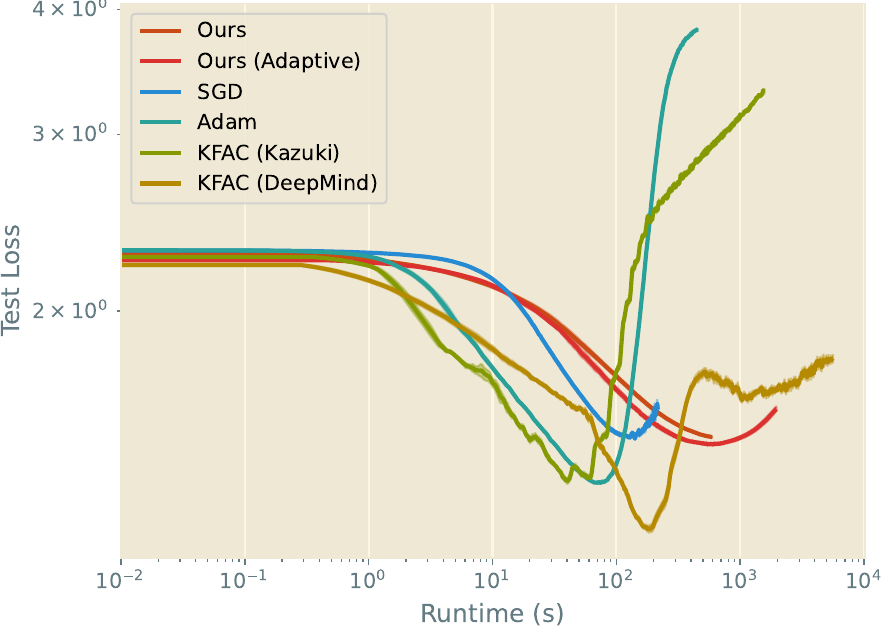}
    \caption[Median training and test losses for Fashion-MNIST, SVHN and CIFAR-10]{
    Median training (left) and test (right) loss achieved on  Fashion-MNIST (top), SVHN (centre) and CIFAR-10 (bottom) by various optimisers using the optimal hyperparameters chosen by ASHA. Values are bootstrap-sampled from 50 random seeds.}
    \label{fig:fashion_and_svhn_losses}
\end{figure}

Surprisingly, KFAC~(DeepMind) performs poorly on Fashion-MNIST, where KFAC (Kazuki) and Adam perform well. First-order optimisers seem well-suited to SVHN, where SGD and Adam achieve the best test losses. On CIFAR-10, Adam and the two KFAC variants perform about the same in terms of training loss, but KFAC~(DeepMind) performs significantly better in terms of test loss.

Our findings seem to validate the widespread use of Adam in practice, given its simplicity as compared to KFAC~(DeepMind). However, the performance of KFAC~(DeepMind) on CIFAR-10 does indicate that there may be benefit to considering second-order optimisers more seriously.

Although our method is not the best on any of the datasets, its performance is not far from that of the other methods. Where the KFAC variants and Adam occasionally diverge during training (see Figure~\ref{fig:fashion_and_svhn_losses}), our method is reasonably stable. Moreover, by leveraging large batch sizes, we converge in fewer epochs and less time than the first-order methods in some settings (e.g.~Ours~(Adaptive) on Fashion-MNIST is faster than both SGD and Adam), despite the additional complexity of our algorithm.
\begin{figure}
    \centering
    \includegraphics[width=0.47\textwidth]{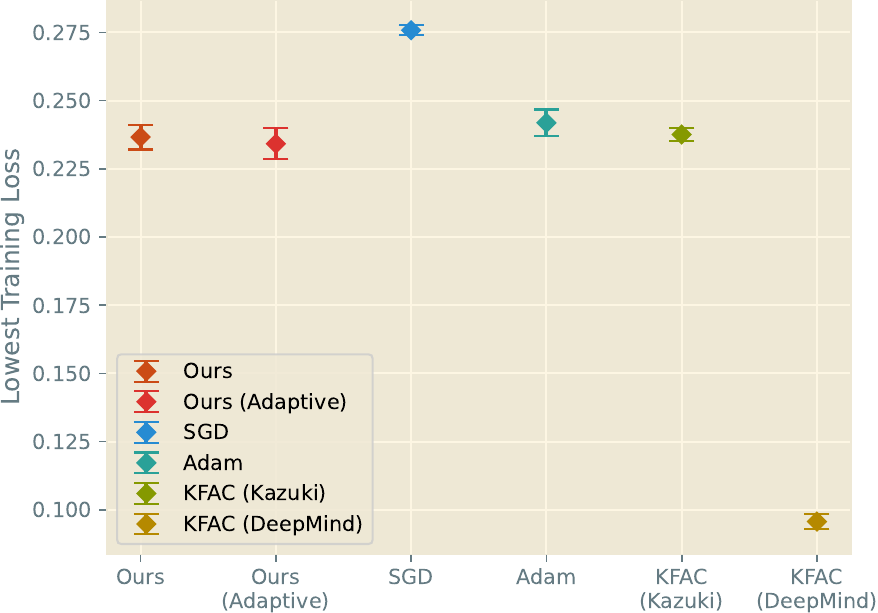}
    \hfill
    \includegraphics[width=0.47\textwidth]{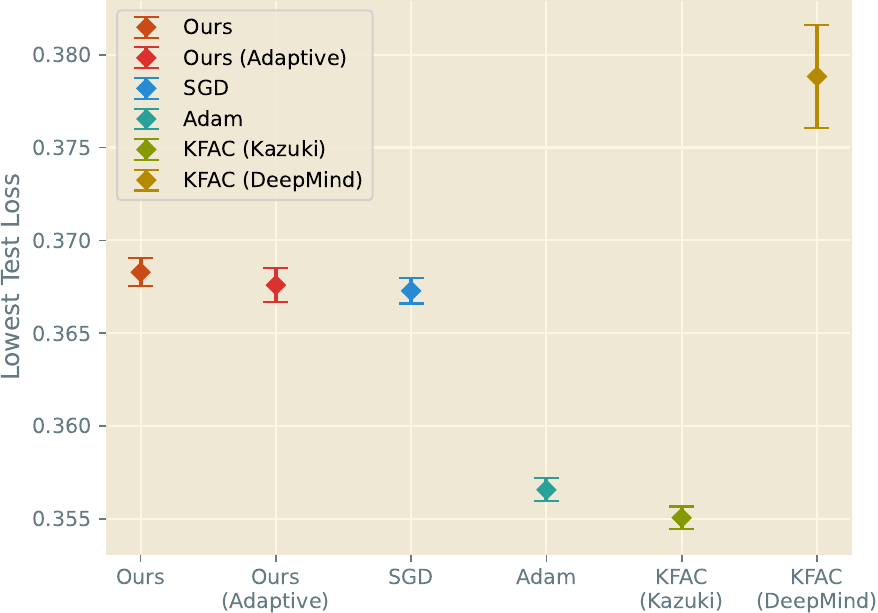}
    \hfill
    \includegraphics[width=0.47\textwidth]{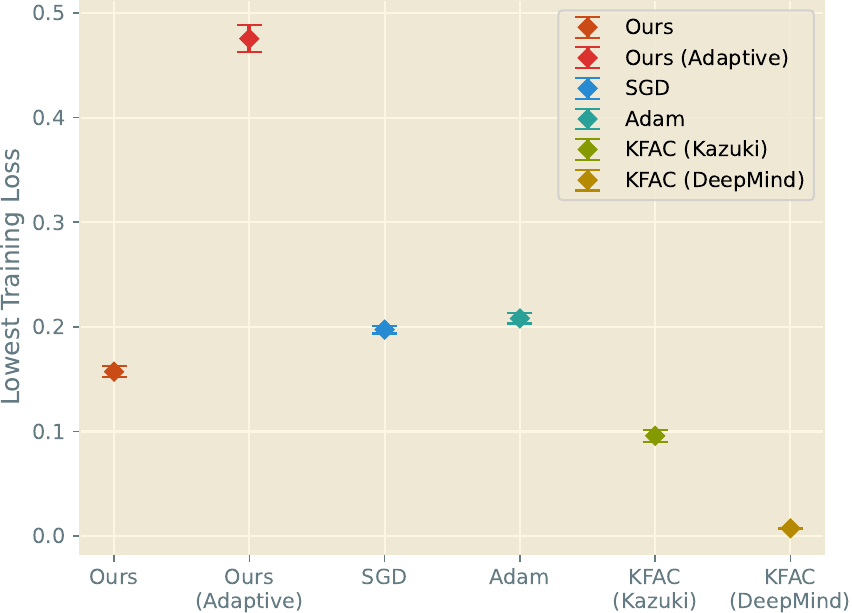}
    \hfill
    \includegraphics[width=0.47\textwidth]{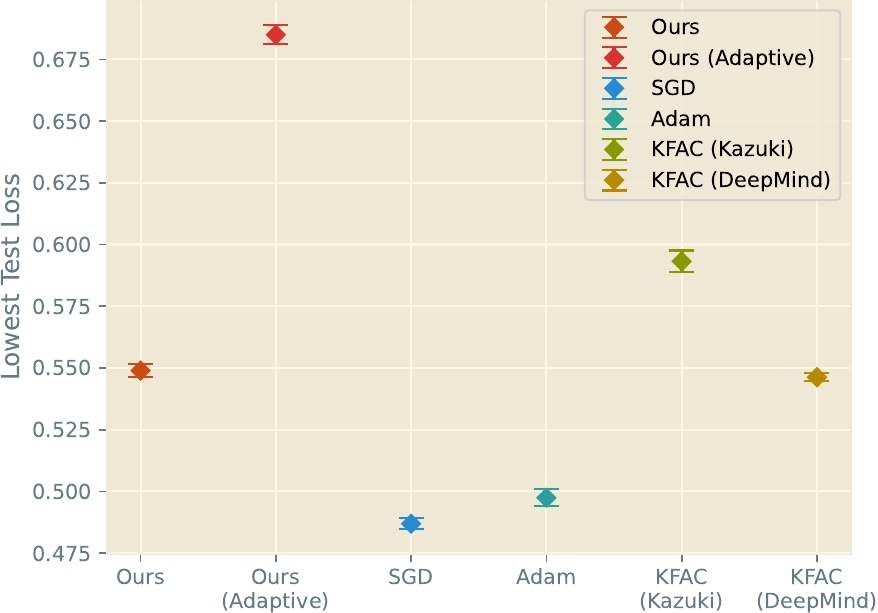}
    \hfill
    \includegraphics[width=0.47\textwidth]{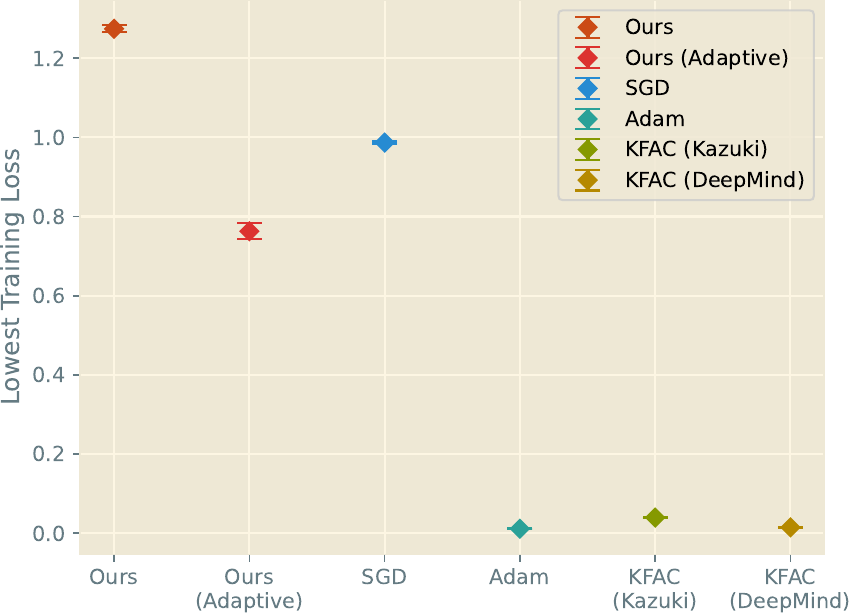}
    \hfill
    \includegraphics[width=0.47\textwidth]{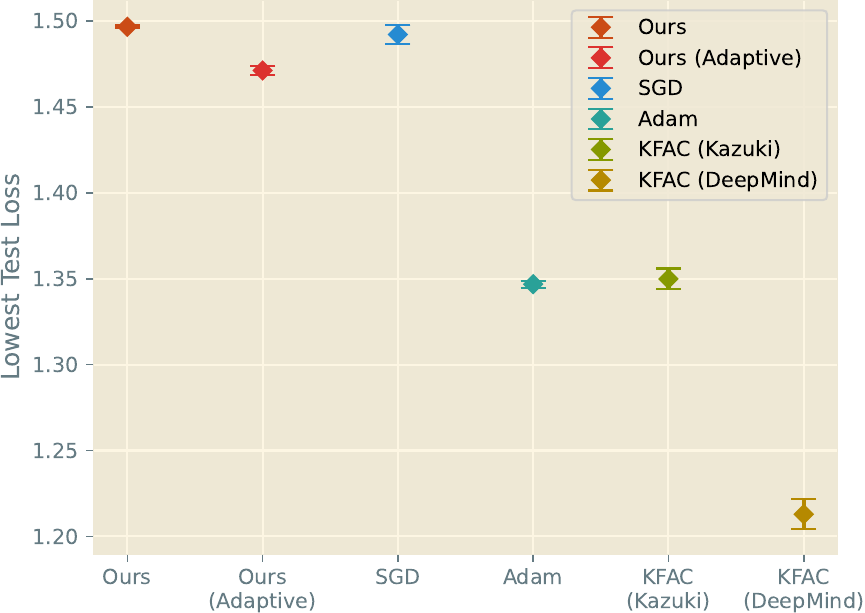}
    \caption[Ranking of optimisers by lowest losses achieved on Fashion-MNIST, SVHN and CIFAR-10]{Ranking of optimisers according to lowest training (left) and test (right) losses achieved on Fashion-MNIST (top), SVHN (centre) and CIFAR-10 (bottom). Error bars show standard error in the mean. Values are the minimum of the loss profile across time, generated by bootstrap sampling from 50 random seeds.}
    \label{fig:rankings}
\end{figure}

\subsection{Discussion}

We posit that the gap between our performance and expected gains is due to error in our series approximation, of which there are two sources. The first is truncation error, which can be reduced to some extent by increasing the number of terms in the series, though the potency of this will depend on how slow the series is to converge. 
The second is numerical error: if the Hessian is poorly conditioned, then the repeated multiplications required to compute more terms may cause the series to diverge --- even if we have chosen $V$ appropriately, so that the series should converge in theory.

By increasing the number of terms in the series, we can test whether the error is due to truncating the series or numerical error. From our experiment in Appendix~\ref{app:truncation_length} examining the effect of truncation length, we find that for UCI~Energy, increasing the number of terms in the series improves performance. However, for the larger-scale problems, we found that increasing the number of steps in the series to be arbitrarily large did not necessarily lead to improved performance. 
There is thus a trade-off between choosing a sufficiently high number of steps to approximate the desired matrix and choosing sufficiently few to avoid numerical issues. Strategies to improve conditioning of the Hessian may also help to improve this trade-off.

We consider KFAC, which also approximates the curvature, yet proves quite successful on the benchmark suite.\footnote{In fact, based on KFAC's performance in all our experiments, we found it surprising that KFAC is not more widely utilised in practice. That said, we also found its performance to vary widely between implementations, which may explain this observation.}
KFAC's Kronecker factorisation supports smoothing the curvature estimate with a moving average, which reduces the impact of occasional, poor-quality approximations. Unfortunately, our full-Hessian approximation cannot support such smoothing due to storage requirements.
Moreover, KFAC's approximation (which discards the off-diagonal blocks) can be understood intuitively as ignoring the correlations between weights of different layers.
In contrast, the rate of convergence of our series varies throughout optimisation, and the impact of truncating the series on the resulting curvature matrix is more difficult to intuit. It may prove fruitful to leverage the same block-diagonal approximation in our method, but with smaller matrices at less risk of ill-conditioning. This would also allow the use of smoothing, which may further improve performance.

There are links between our series approximation and the conjugate gradient (CG) method \citep{hestenes_methods_1952}. CG solves a linear system of the form $\mathbf{A}\mathbf{x} = \mathbf{b}$ iteratively. At the $k$-th  iteration, CG finds the best $\mathbf{x}$ in the $k$-th Krylov subspace (where the $k$-th Krylov subspace is the subspace generated by repeated applications of $\mathbf{A}$ to the residual $\mathbf{r}$, i.e.~$\mathcal{K}_k = \text{span}\{\mathbf{r}, \mathbf{A}\mathbf{r}, ..., \mathbf{A}^{k-1}\mathbf{r} \}$) where $\vec{r} = \mathbf{b} - \mathbf{A}\mathbf{x}_0 $. 
The inverse Neumann approximation truncated at the $k$-th term also finds a vector in the $k$-th Krylov subspace, but it is not guaranteed to be the optimal one, and so we may expect the Neumann approximation to be worse than CG.\footnote{However, there is literature showing that Neumann series are more stable than CG in neural networks \citep{shaban_truncated_2019,liao_reviving_2018}}
Although the series we present in \eqref{eq:MatrixSeries} is slightly different, since it is computing the square and square-root at the same time as the inverse, we note that this may provide a clue as to the poor convergence behaviour of the series in general. Future work may consider leveraging insights from the conjugate gradient method to better approximate the inverted saddle-free Hessian.

\section{Conclusions}
\label{sec:Conclusions}

In this work, we have motivated, derived and justified an approach to implementing Saddle-Free Newton optimisation of neural networks. By development of an infinite series, we are able to take the absolute values of Hessian eigenvalues without any explicit decomposition. With the additional aid of Hessian-vector products, we further avoid any explicit representation of the Hessian. To our knowledge, this is the first approximate second-order method to take the absolute value of Hessian eigenvalues with an asymptotic exactness guarantee, and whose convergence is limited by compute time rather than available memory. Our algorithm tractably scales to larger networks and datasets, and although it does not consistently outperform Adam or a well-engineered KFAC implementation, its behaviour is comparable to these baselines, in terms of test loss and run time.

Improvements to the inverse approximation such as leveraging Kronecker factorisation or ideas from the conjugate gradient method may provide fruitful avenues of research for future saddle-free Hessian-based optimisation algorithms such as ours. Strategies to reduce numerical error, such as methods to improve the condition number of the Hessian, should also be investigated.
Our findings generally support the widespread use of Adam, which performed well on most benchmarks, often beating KFAC despite being a much simpler algorithm. However, the strong performance of KFAC on CIFAR-10, our most complex benchmark, indicates that there may yet be significant gains by applying second-order methods to deep learning.

\section*{Acknowledgements}

We are grateful to Richard E.\ Turner for valuable discussions about this work and its presentation.

We acknowledge computation provided by the CSD3 operated by the
University of Cambridge Research Computing Service (\url{www.csd3.cam.ac.uk}),
provided by Dell~EMC and Intel using Tier-2 funding from the Engineering and
Physical Sciences Research Countil (capital grant EP/P020259/1), and DiRAC
funding from the Science and Technology Facilities Council
(\url{www.dirac.ac.uk}).

Ross Clarke acknowledges funding from the Engineering and Physical Sciences
Research Council (project reference 2107369, grant EP/S515334/1).

\bibliographystyle{myicml2021}
\bibliography{ZoteroLibrary.bib}

\clearpage
\appendix

\section{Empirical Notes}

\subsection{Datasets Used}
\label{app:DatasetLicences}
The datasets we use are all standard in the ML literature; we outline their usage conditions in Table~\ref{tab:3:DatasetLicences}.

\begin{table}[h]
    \centering
    \caption{Licences under which we use datasets in this work.}
    \resizebox{\linewidth}{!}{
    \begin{tabular}{cccccS[table-format=5]}
        \toprule
        Dataset & Licence & Source & Input & Output & {Total Size} \\
        \midrule
        UCI Energy & \centered{Creative Commons Attribution 4.0\\International (CC BY 4.0)}   & \centered{\citet{tsanas_accurate_2012};\\ \citet{gal_dropout_2016}} & $8$-Vector & Scalar & 692 \\
        Fashion-MNIST & MIT & \citet{xiao_fashion-mnist_2017} & $28 \times 28$ Image & Class (from 10) & 60000\\
        CIFAR-10 & None specified & \citet{krizhevsky_learning_2009} & $32 \times 32$ Image & Class (from 10) & 60000\\
        SVHN & None specified & \citet{netzer_reading_2011} & $32 \times 32$ Image & Class (from 10) & 99289\\
        \bottomrule
    \end{tabular}
    \label{tab:3:DatasetLicences}
    }
\end{table}

\subsection{Computing Resources Used}
\label{app:ComputingResources}

The experiments presented were performed using hardware shown in Table~\ref{tab:ExperimentalHardware}. All runtime comparisons were thus performed on like-for-like hardware. We make use of GPU acceleration throughout, using the JAX library \citep{bradbury_jax_2018}. Our own code is available at \url{https://github.com/rmclarke/SeriesOfHessianVectorProducts}.

\begin{table}[h]
  \centering
  \caption{System configurations used to run our experiments.}
  \label{tab:ExperimentalHardware}
  \resizebox{\linewidth}{!}{
  \begin{tabular}{cccccc}
    \toprule
    Type & CPU & GPU (NVIDIA) & Python & JAX & CUDA \\
    \midrule
    Cambridge Service for Data Driven Discovery (CSD3)*
    & AMD EPYC 7763 & Ampere A100 & 3.9.6 & 0.3.25 & 11.1\\
    \bottomrule
  \end{tabular}
  }
  \vspace{2pt}
  {\small * \url{www.csd3.cam.ac.uk}}
\end{table}


\subsection{Experimental Hyperparameters}
\label{app:Hyperparameters}
We outline in Tables~\ref{tab:HyperparameterOptimisation_Pt1} and \ref{tab:HyperparameterOptimisation_Pt2} the search ranges chosen for our hyperparameter optimisation using ASHA, as well as the best hyperparameters chosen for each setting and the corresponding final losses.
Our network architectures and corresponding time budgets are enumerated below:
\begin{description}[noitemsep]
    \item[UCI Energy] \citep{tsanas_accurate_2012}: MLP with 7 hidden layers, each of 12 units (budget 5~minutes). A full training run is 6\,000 epochs. 
    \item[Fashion-MNIST] \citep{xiao_fashion-mnist_2017}: MLP with one hidden layer of 50, units (budget 5~minutes). A full training run is 10 epochs. 
    \item[SVHN] \citep{netzer_reading_2011}: ResNet-18 \citep{he_deep_2016} (budget 45~minutes). A full training run is 10 epochs. 
    \item[CIFAR-10] \citep{krizhevsky_learning_2009}: ResNet-18 \citep{he_deep_2016} (budget 2~hours). A full training run is 72 epochs. 
\end{description}

For KFAC~(DeepMind), we set the curvature EMA to $0.95$ and did not tune it on advice from the library's author. KFAC allows computational savings by using the same damping parameter and inverse curvature estimate for multiple weight updates, but we set both of these to update at every iteration to match the setting of Ours and Ours~(Adaptive). For KFAC~Kazuki we set the curvature EMA to zero to turn off the moving average. For KFAC~Kazuki, the learning rate, momentum and initial damping were tuned and then fixed (i.e.~not adapted).

Since the SFN~Exact variants were only applied to UCI~Energy, we enumerate those settings here rather than in the tables. 
For SFN~Exact, the optimal settings of tuned parameters were a learning rate of $0.0049$ and momentum of $0.0638$. The threshold for replacing small eigenvalues was $0.00052$ and the replacement constant was $1.54e^{-4}$. For SFN~Exact (Adaptive), we only need to tune the initial damping, for which the optimal value was $0.0125$. We set the decay rate of the exponential moving average to $0.95$, as for KFAC (DeepMind).

In the tables below, \emph{Ranges} shows the search spaces considered for each hyperparameter as a uniform range, except those marked $\log$, which are sampled from a log-uniform range. We sampled 100 random configurations for each algorithm and dataset combination.
For Adam, we tuned $1 - \beta_1$ and $1 - \beta_2$ using the ranges below and then computed the corresponding values for $\beta_1$ and $\beta_2$.
We show the optimal hyperparameters chosen to minimise validation loss and the corresponding losses obtained. Note that initial random seeds during tuning were not controlled, so comparisons of the losses achieved by each method must be made with care. All values are rounded to three significant figures.

\begin{table}
    \setlength{\tabcolsep}{1pt}
    \centering
    \scriptsize
    \caption[Tuning details (Part 1)]{Details of our hyperparameter search strategy (Part 1). \emph{Ranges} shows the search spaces considered for each hyperparameter as a uniform range, except those marked $\log$, which are sampled from a log-uniform range. Other rows show the optimal hyperparameters chosen to minimise validation loss and the corresponding losses obtained. Note that initial random seeds during tuning were not controlled, so comparisons of the losses achieved by each method must be made with care. All values are rounded to three significant figures.}
    \label{tab:HyperparameterOptimisation_Pt1}
     \sisetup{
      detect-all,
      table-number-alignment=center,
      separate-uncertainty,
      round-mode=figures, 
      round-precision=3,
      tight-spacing=true,
      output-exponent-marker=\text{e},
     }
     \begin{minipage}[c]{1.0\textwidth}
        \centering
        \subcaption*{Ours}
        \begin{tabular}{
            c
            S[table-format=4,round-mode=none]
            S[table-format=1.4]
            S[table-format=1.4]
            S[table-format=1.2e+4,scientific-notation=true]
            S[table-format=2,round-mode=none]
            S[table-format=2,round-mode=none]
            S[table-format=1.2e-1,scientific-notation=true]
            S[table-format=1.2e-1,scientific-notation=true]
            S[table-format=1.2e-1,scientific-notation=true]}
        \toprule
        {Setting} & {\centered{Batch\\Size}} & {\centered{Learning\\rate}} &{Momentum} & {Damping} & {\centered{Number of\\Series Terms\\K}} & {\centered{Order of\\ Acceleration}} & {\centered{Training\\Loss}} & {\centered{Test\\Loss}} & {\centered{Validation\\Loss}} \\
        \midrule
        Range & {$50 \times 2^{[1, 7]}$} &  {$\log [10^{-3}, 5]$} & {$\log [10^{-3}, 0.95]$}  & {$\log [10^{-8}, 1]$} & {$[1 , 20]$} & {$[0, \frac{K-1}{2}]$} \\
        {UCI~Energy} & {---} & 1.787 & 0.717 & 0.0244334238 & 18 & 8 & 0.000455	& 0.000862	& 0.001289 \\
        {Fashion-MNIST} & 400 & 0.305 & 0.428 & 5.77E-02 & 13 & 5 & 0.214825 &	0.371922&	0.337504\\
        {SVHN} & 200 & 0.108 & 0.327 & 1.87E-04 & 1 & 0 & 0.090128	&0.595463	& 0.520379 \\
        {CIFAR-10} & 3200 & 0.562 & 0.115 & 2.44E-03 & 2 & 0 & 1.374 &	1.512	&1.524 \\
         \bottomrule
    \end{tabular}
    \end{minipage}
    \setlength{\tabcolsep}{4pt}
    \begin{minipage}[c]{1.0\textwidth}
    \centering
    \vspace{8pt}
    \subcaption*{Ours (Adaptive)}
    \begin{tabular}{
            c
            S[table-format=4,round-mode=none]
            S[table-format=1.2e+4,scientific-notation=true]
            S[table-format=2,round-mode=none]
            S[table-format=2,round-mode=none]
            S[table-format=1.2e-1,scientific-notation=true]
            S[table-format=1.2e-1,scientific-notation=true]
            S[table-format=1.2e-1,scientific-notation=true]}
        \toprule
        {Setting} & {\centered{Batch\\Size}} & {\centered{Initial\\Damping}} &  {\centered{Number of\\Series Terms\\(K)}} & {\centered{Order of\\ Acceleration}} & {\centered{Training\\Loss}} & {\centered{Test\\Loss}} & {\centered{Validation\\Loss}} \\
        \midrule
        Range & {$50 \times 2^{[1, 7]}$} &  {$\log [10^{-8}, 10]$} & {$[1 , 20]$} & {$[0, \frac{K-1}{2}]$} \\
        {UCI~Energy} & {---} & 0.00000146417969 & 9 & 4 & 0.00715 &	0.008174	 &0.008454 \\
        {Fashion-MNIST} & 800 & 0.09330085595 & 14 & 2 & 0.244742& 0.365433	& 0.334861 \\
        {SVHN} & 100 & 0.000000413832792 & 5 & 2 & 0.447874 &	0.71047 &	0.648682 \\
        {CIFAR-10} & 3200 & 0.000103660718 & 5 & 2 & 0.891253 &	1.560586 &	1.554743 \\
         \bottomrule
    \end{tabular}
    \end{minipage}
    \begin{minipage}[c]{1.0\textwidth}
    \centering
    \vspace{8pt}
    \subcaption*{SGD}
    \begin{tabular}{
            c
            S[table-format=4,round-mode=none]
            S[table-format=1.2e-1,scientific-notation=true]
            S[table-format=1.2e-1,scientific-notation=true]
            S[table-format=1.2e-1,scientific-notation=true]
            S[table-format=1.2e-1,scientific-notation=true]}
        \toprule
        {Setting} & {\centered{Batch\\Size}} & {\centered{Learning\\Rate}} & {\centered{Training\\Loss}} & {\centered{Test\\Loss}} & {\centered{Validation\\Loss}} 
        \\
        \midrule
        Range & {$50 \times 2^{[1, 7]}$} & {$\log [10^{-6}, 10^{-1}]$}  \\
        {UCI~Energy} & {---} & 0.0945226911 & 0.00715 & 0.008174 &0.008454 \\
        {Fashion-MNIST} & 50 & 0.01877169559 & 0.380966 & 0.368534 &0.342496 \\
        {SVHN} & 50 & 0.0133199974 & 0.229699 &0.486794 & 0.446993 \\
        {CIFAR-10} & 3200 & 0.00411401629 & 1.006781 &1.571846 & 1.555225\\
         \bottomrule
    \end{tabular}
    \end{minipage}
    \setlength{\tabcolsep}{1pt}
    \begin{minipage}[c]{1.0\textwidth}
    \centering
    \vspace{8pt}
    \subcaption*{Adam}
    \begin{tabular}{
            c
            S[table-format=4,round-mode=none]
            S[table-format=1.2e-1,scientific-notation=true]
            S[table-format=1.2e-1,scientific-notation=true]
            S[table-format=1.5]
            S[table-format=1.6]
            S[table-format=1.2e-1,scientific-notation=true]
            S[table-format=1.2e-1,scientific-notation=true]
            S[table-format=1.2e-1,scientific-notation=true]}
        \toprule
        {Setting} & {\centered{Batch\\Size}} & {\centered{Learning\\Rate}} & {$\epsilon$} & {$1 - \beta_1$} & {$1 - \beta_2$} & {\centered{Training\\Loss}} & {\centered{Test\\Loss}} & {\centered{Validation\\Loss}} \\
        \midrule
        Range & {$50 \times 2^{[1, 7]}$} & {$\log [10^{-6}, 10^{0}]$} & {$\log [10^{-10}, 10^{1}]$} & {$\log [10^{-3}, 10^{0}]$} & {$\log [10^{-4}, 10^{0}]$}\\
        {UCI~Energy} & {---} & 0.0173920852 & 0.000000000272693206 & 0.254942552 & 0.0156352886 & 0.000399 &0.000801 & 0.000783\\
        {Fashion-MNIST} & 200 & 0.000655385518 & 0.0000599064249188612 & 0.474986861 & 0.01949103971 & 0.23334 &0.360688 & 0.331204 \\
        {SVHN} & 1600 &  0.001836152161 & 0.00000335486047056438 & 0.1262533228	& 0.02532913797 & 0.183522 & 0.550373 & 0.511474 \\
        {CIFAR-10} & 800 & 0.0000770367560950785 & 0.0003613837512 & 0.002655117842 & 0.000100250760871945 & 1.400323 & 1.435618 & 1.450168 \\
         \bottomrule
    \end{tabular}
    \end{minipage}
\end{table}

\begin{table}
    \setlength{\tabcolsep}{4pt}
    \centering
    \scriptsize
    \caption[Tuning details (Part 2)]{Details of our hyperparameter search strategy (Part 2), comments as for Table~\ref{tab:HyperparameterOptimisation_Pt1}}
    \label{tab:HyperparameterOptimisation_Pt2}
     \sisetup{
      detect-all,
      table-number-alignment=center,
      separate-uncertainty,
      round-mode=figures, 
      round-precision=3,
      tight-spacing=true,
      output-exponent-marker=\text{e},
     }
    \begin{minipage}[c]{1.0\textwidth}
        \centering
        \subcaption*{KFAC (DeepMind)}
        \begin{tabular}{
            c
            S[table-format=3,round-mode=none]
            S[table-format=1.2e-1,scientific-notation=true]
            S[table-format=1.2e-1,scientific-notation=true]
            S[table-format=1.2e-1,scientific-notation=true]
            S[table-format=1.2e-1,scientific-notation=true]}
        \toprule
        {Setting} & {\centered{Batch\\Size}} & {\centered{Initial\\Damping}} & {\centered{Training\\Loss}} & {\centered{Test\\Loss}} & {\centered{Validation\\Loss}} \\
        \midrule
        Range &  {$50 \times 2^{[1, 7]}$} & {$\log [10^{-8}, 10^{0}]$}\\
        {UCI~Energy} & {---} & 0.0000697125495 & 0.000244 & 0.001186 & 0.000841 \\
        {Fashion-MNIST} & 3200 & 0.527373565167249 & 0.099753 & 0.447368 & 0.419701 \\
        {SVHN} & 1600 & 0.576047619 & 0.023456 & 0.672046 & 0.610102 \\
        {CIFAR-10} & 800 & 0.216592763 & 1.193313 & 1.344628 & 1.304051\\
         \bottomrule
    \end{tabular}
    \end{minipage}
    \begin{minipage}[c]{1.0\textwidth}
        \centering
    \vspace{8pt}
        \subcaption*{KFAC (Kazuki)}
        \begin{tabular}{
            c
            S[table-format=4,round-mode=none]
            S[table-format=1.2e-1,scientific-notation=true]
            S[table-format=1.2e-1,scientific-notation=true]
            S[table-format=1.2e-1,scientific-notation=true]
            S[table-format=1.2e-1,scientific-notation=true]
            S[table-format=1.2e-1,scientific-notation=true]
            S[table-format=1.2e-1,scientific-notation=true]}
        \toprule
        {Setting} & {\centered{Batch\\Size}} & {\centered{Learning\\rate}} &{Momentum} & {Damping} & {\centered{Training\\Loss}} & {\centered{Test\\Loss}} & {\centered{Validation\\Loss}} \\
        \midrule
        Range &   {$50 \times 2^{[1, 7]}$} & {$\log [10^{-6}, 10^{1}]$} &{$\log [10^{-3}, 0.95]$} & {$\log [10^{-8}, 10^{0}]$}\\
        {UCI~Energy} & {---} & 0.0908813574 & 0.26447181 & 0.00000563013694 & 0.000219 & 0.000758 & 0.000979 \\
        {Fashion-MNIST} & 800 & 0.08678173597 & 0.405766089 & 0.2197808501 & 0.249645 & 0.356885 & 0.327244\\
        {SVHN} & 800 & 0.00882824525 & 0.222133467 & 0.000379341022 & 0.138403 & 0.724875 & 0.665415\\
        {CIFAR-10} & 3200 & 0.0228984138 & 0.00670495446 & 0.00221384786 & 0.900991 & 1.608021 & 1.571907\\
         \bottomrule
    \end{tabular}
    \end{minipage}
\end{table}

\subsection{KFAC Adaptive Heuristics}
\label{app:KFACAdaptivity}
Here, we give a more technically specific overview of the key adaptive heuristics deployed by KFAC \citet{martens_optimizing_2015}, which we presented in Section~\ref{sec:RelatedWork} and employ in our \emph{Adaptive} experimental settings.

\begin{description}
\item[Moving average of curvature matrix] KFAC maintains an online, exponentially-decaying average of the approximate curvature matrix, which improves its approximation thereof and makes the method more robust to stochasticity in mini-batches. For a curvature matrix $\vec{C}$ and decay factor $\beta \in (0, 1)$, we have
\begin{equation}
    \vec{C}_\trainiter \gets \beta \vec{C}_\trainiter + (1 - \beta) \vec{C}_{\trainiter - 1}.
\end{equation}

\item[Adaptive learning rate and momentum factor] KFAC's update rule incorporates a learning rate and a momentum factor which are both computed adaptively by assuming a locally quadratic model and solving for the local model's optimal learning rate and momentum factor at every iteration. When the local approximate model has curvature matrix $\vec{C}$, gradient $\vec{g}$ and our proposed update direction is $\vec\Delta$, we compute the learning rate $\eta$ and momentum $\mu$ by
\begin{equation}
\begin{bmatrix}
  \eta_{\trainiter} \\ \mu_{\trainiter}
\end{bmatrix} = -\begin{bmatrix}
    \vec\Delta_{\trainiter}\trans \vec{C} \vec\Delta_{\trainiter}
    & \vec\Delta_{\trainiter}\trans \vec{C} (\vec{x}_\trainiter - \vec{x}_{\trainiter-1}) \\
    \vec\Delta_{\trainiter}\trans \vec{C} (\vec{x}_\trainiter - \vec{x}_{\trainiter-1}) 
    & (\vec{x}_\trainiter - \vec{x}_{\trainiter-1})\trans \vec{C} (\vec{x}_\trainiter - \vec{x}_{\trainiter-1})       
    \end{bmatrix}^{-1}
    \begin{bmatrix}
      \vec{g}_{\trainiter}\trans \vec\Delta_{\trainiter} \\
      \vec{g}_{\trainiter}\trans (\vec{x}_\trainiter - \vec{x}_{\trainiter-1})
    \end{bmatrix}
\end{equation}.

\item[Tikhonov damping with Levenberg-Marquardt style adaptation.]
KFAC incorporates two damping terms: $\eta$ for weight regularisation, and $\lambda$ which is adapted throughout training using Levenberg-Marquardt style updates \citep{more_levenberg-marquardt_1978}. The damping constant $\lambda$ can be interpreted as defining a trust region for the update step. When the curvature matrix matches the observed landscape, the trust region is grown by shrinking $\lambda$ and vice versa.
This level of ``mismatch'' is captured by the ratio of the actual change in loss to the change predicted by the locally quadratic model. If the ratio is near one and thus the local quadratic model matches the observed losses well, then the curvature matrix is a useful approximation to the local landscape and damping is decreased (i.e.~the trust region is increased). Conversely, if the ratio is far from one (implying the local model is not accurate), the damping is increased so that optimisation becomes more SGD-like (i.e.~the trust region is reduced).

In notation, if the objective function is $f(\vec{x})$, $\widehat{f}(\vec{x})$ is our local quadratic estimate of $f(\vec{x})$ and we have some adjustment factor $\omega \in (0, 1)$, KFAC updates the damping $\lambda$ by the following rule:
\begin{align}
    \rho_{\trainiter} &= \frac{f(\vec{x}_{\trainiter}) - f(\vec{x}_{\trainiter-1})}{\widehat{f}(\vec{x}_{\trainiter}) - \widehat{f}(\vec{x}_{\trainiter-1})}
    &
    \lambda_{\trainiter+1} &= \begin{cases}
        \frac{1}{\omega} \lambda_\trainiter & \rho_\trainiter < \frac{1}{4} \\
        \omega \lambda_\trainiter & \rho_\trainiter > \frac{3}{4} \\
        \lambda_{\trainiter} & \text{otherwise}
    \end{cases}.
\end{align}
\end{description}

\subsection{Series Acceleration}
\label{app:SeriesAcceleration}
Recall the \citet{sablonniere_comparison_1991}-accelerated Wynn $\epsilon$-algorithm \citep{wynn_device_1956} applied to the series of $m$th partial sums $\vec{s}_m$ gives the recursion
\begin{equation}
    \vec{\epsilon}_m^{(-1)} = 0, \qquad
    \vec{\epsilon}_m^{(0)} = \vec{s}_m, \qquad
    \vec{\epsilon}_m^{(c)} = \vec{\epsilon}_{m+1}^{(c-2)} + \left( \left\lfloor \frac{c}{2} \right\rfloor + 1 \right) \left( \vec{\epsilon}_{m+1}^{(c-1)} - \vec{\epsilon}_m^{(c-1)} \right)^{-1}.
\end{equation}
This definition is sufficient to compute the accelerated series, but a naïve implementation requires all the $\vec{\epsilon}$ terms to be stored in memory, which rapidly becomes problematic for larger networks. By carefully defining the order in which these terms are computed, we may substantially reduce the intermediate memory storage required. Such a strategy was outlined by \citet{wynn_acceleration_1962}, but a combination of changing conventions and unclear formatting make it difficult to interpret; we present our own derivation of the same process in Algorithm~\ref{alg:SeriesAcceleration}.

\begin{algorithm}[h]
\caption{Sablonnière-Modified Wynn $\epsilon$-Algorithm with Samelson Inverse}
\label{alg:SeriesAcceleration}
\begin{algorithmic}
    \REQUIRE Sequence $\vec{p}_0, \vec{p}_1, \cdots, \vec{p}_{2\numseriesaccelerations}$ and acceleration order $\numseriesaccelerations$
    \FOR{$m \gets 0$ \TO $2\numseriesaccelerations$}
        \STATE $\vec{\epsilon}_{m}^{(0)} \gets \vec{p}_m, \quad \vec{\epsilon}_{m+1}^{(-1)} \gets \vec{0}$
    \ENDFOR
    \STATE $m \gets 0, \quad c \gets 1$
    \WHILE{$m \leq 2\numseriesaccelerations$}
        \WHILE{$m \geq 0$}
            \STATE $\vec{\epsilon}_{m}^{(c)} = \vec{\epsilon}_{m+1}^{(c+2)} + \left( \lfloor \frac{c}{2} \rfloor + 1 \right) \left( \vec{\epsilon}_{m+1}^{(c-1)} - \vec{\epsilon}_m^{(c-1)} \right)^{-1}$
            \STATE Delete $\vec{\epsilon}_{m+1}^{(c-2)}$
            \STATE $m \gets m - 1, \quad c \gets c + 1$
        \ENDWHILE
        \STATE Delete $\vec{\epsilon}_{m+1}^{(c-2)}$
        \STATE $m \gets c - 1, \quad c \gets 1$
    \ENDWHILE
    \RETURN $\vec{\epsilon}_{0}^{(2\numseriesaccelerations)}$
\end{algorithmic}
\end{algorithm}




\clearpage
\section{Additional Results}

\subsection{Tabulated Results}
\label{app:TabularResults}
We present the results from Section~\ref{sec:Experiments} in tabular form in Table~\ref{tab:NumericalResults}.

\begin{table}
\scriptsize
    \caption[Median training, test and validation losses for all optimisers]{We provide the median of the training loss, test loss and validation loss for bootstrap-samples from 50 random seeds. Medians values are rounded to four significant figures; errors are rounded to two significant figures.}
    \label{tab:NumericalResults}
    \centering
    \resizebox{1.0\linewidth}{!}{
    \begin{tabular}{
            c
            c
            c
            c
            c}
        \toprule
         Setting & Algorithm & {Training Loss} & {Test Loss} & {Validation Loss}   \\
     \midrule
     \multirow{6}{*}{UCI Energy}
         & Ours & 0.006459 $\pm$ 0.005 & 0.007087 $\pm$ 0.0066 & 0.0104 $\pm$ 0.0083\\
         & Ours (Adaptive) & 0.001029 $\pm$ 4e-05 & 0.001678 $\pm$ 9.5e-05 & 0.002165 $\pm$ 0.00012\\
         & SGD & 0.001947 $\pm$ 0.0002 & 0.002361 $\pm$ 0.00025 & 0.003191 $\pm$ 0.00038\\
         & Adam & 0.000657 $\pm$ 4e-05 & 0.00113 $\pm$ 8.9e-05 & 0.001571 $\pm$ 0.00011\\
         & KFAC (Kazuki) & 0.5018 $\pm$ 0.00056 & 881.8 $\pm$ 6.2e+02 & 904.1 $\pm$ 8.1e+02\\
         & KFAC (DeepMind) & 0.000714 $\pm$ 0.00012 & 0.006663 $\pm$ 0.0024 & 0.008849 $\pm$ 0.0031\\
         & Exact SFN & 0.0005019 $\pm$ 2.4e-05 & 0.0015 $\pm$ 9.2e-05 & 0.001994 $\pm$ 9.7e-05\\
         & Exact SFN (Adaptive) & 0.0004941 $\pm$ 1.4e-05 & 0.001045 $\pm$ 3.7e-05 & 0.001407 $\pm$ 3.4e-05\\
         & LBFGS & 0.002619 $\pm$ 0.00051 & 0.002929 $\pm$ 0.00081 & 0.004161 $\pm$ 0.00069\\
     \midrule
     \multirow{4}{*}{Fashion-MNIST}
         & Ours & 0.237 $\pm$ 0.0061 & 0.3691 $\pm$ 0.00081 & 0.3423 $\pm$ 0.00065\\
         & Ours (Adaptive) & 0.233 $\pm$ 0.0054 & 0.3684 $\pm$ 0.0009 & 0.3429 $\pm$ 0.0011\\
         & SGD & 0.2762 $\pm$ 0.0024 & 0.3693 $\pm$ 0.00062 & 0.3436 $\pm$ 0.00053\\
         & Adam & 0.2425 $\pm$ 0.0054 & 0.3582 $\pm$ 0.00086 & 0.3312 $\pm$ 0.00036\\
         & KFAC (Kazuki) & 0.2375 $\pm$ 0.0019 & 0.3566 $\pm$ 0.0006 & 0.3328 $\pm$ 0.00078\\
         & KFAC (DeepMind) & 0.09567 $\pm$ 0.0027 & 0.4423 $\pm$ 0.0023 & 0.4222 $\pm$ 0.0022\\
    \midrule
    \multirow{4}{*}{SVHN}
         & Ours & 0.1573 $\pm$ 0.0049 & 0.583 $\pm$ 0.0033 & 0.5205 $\pm$ 0.0021\\
         & Ours (Adaptive) & 0.4757 $\pm$ 0.014 & 0.6901 $\pm$ 0.0041 & 0.6335 $\pm$ 0.0036\\
         & SGD & 0.1979 $\pm$ 0.0036 & 0.5113 $\pm$ 0.0024 & 0.4503 $\pm$ 0.0024\\
         & Adam & 0.2082 $\pm$ 0.0042 & 0.5029 $\pm$ 0.0045 & 0.4514 $\pm$ 0.004\\
         & KFAC (Kazuki) & 0.09535 $\pm$ 0.006 & 0.7036 $\pm$ 0.0047 & 0.6288 $\pm$ 0.0048\\
         & KFAC (DeepMind) & 0.007294 $\pm$ 0.00078 & 0.6854 $\pm$ 0.0026 & 0.6288 $\pm$ 0.0035\\
    \midrule
    \multirow{4}{*}{CIFAR-10}
         & Ours & 1.275 $\pm$ 0.009 & 1.497 $\pm$ 0.00098 & 1.504 $\pm$ 0.0019\\
         & Ours (Adaptive) & 0.7614 $\pm$ 0.02 & 1.595 $\pm$ 0.0084 & 1.596 $\pm$ 0.0076\\
         & SGD & 0.9871 $\pm$ 0.0027 & 1.597 $\pm$ 0.017 & 1.617 $\pm$ 0.014\\
         & Adam & 0.01125 $\pm$ 0.0012 & 3.803 $\pm$ 0.0091 & 3.761 $\pm$ 0.012\\
         & KFAC (Kazuki) & 0.03947 $\pm$ 0.0011 & 3.29 $\pm$ 0.0075 & 3.226 $\pm$ 0.013\\
         & KFAC (DeepMind) & 0.01466 $\pm$ 0.0012 & 1.727 $\pm$ 0.019 & 1.683 $\pm$ 0.019\\
     \bottomrule
    \end{tabular}
    }
\end{table}

\subsection{Test and Training Trajectories Per Iteration}
\label{app:iteration_steps}
In Figures~\ref{fig:appch7:uci_energy_fashion} and \ref{fig:appch7:svhn_cifar}, we present complementary plots for the experiments in Section~\ref{sec:Experiments}, showing the median training and test losses plotted as a function of weight update steps rather than time.

\begin{figure}
    \centering
    \includegraphics[width=0.4\textwidth]{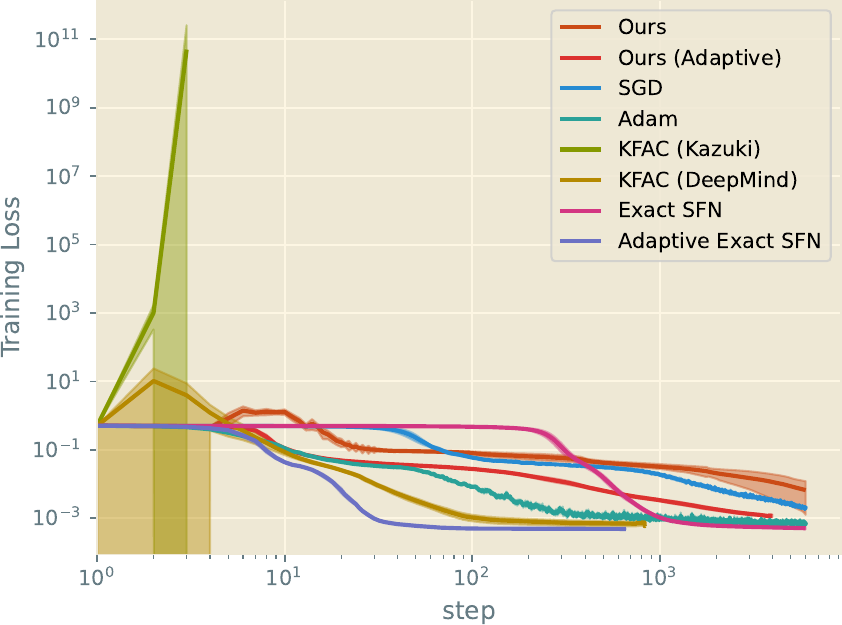}
    \includegraphics[width=0.4\textwidth]{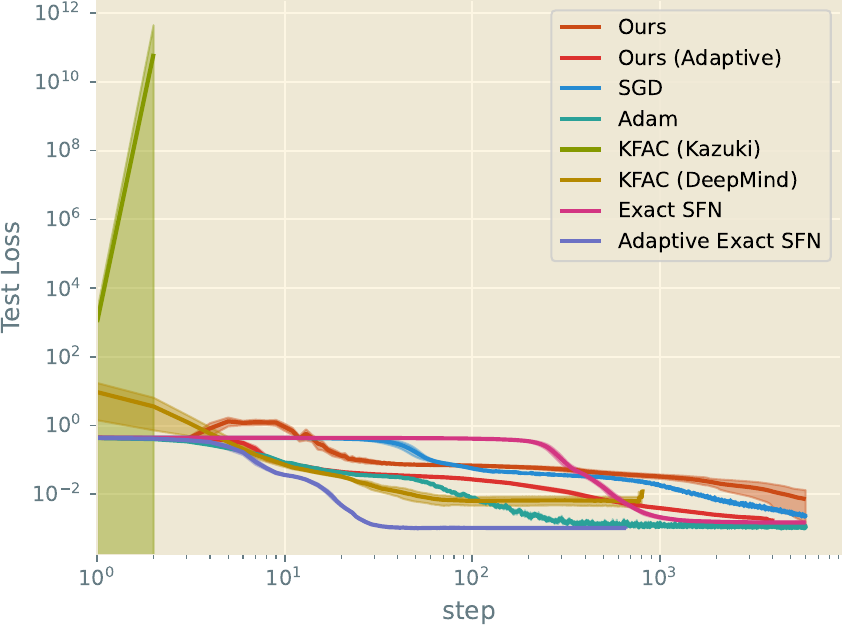}
    \includegraphics[width=0.4\textwidth]{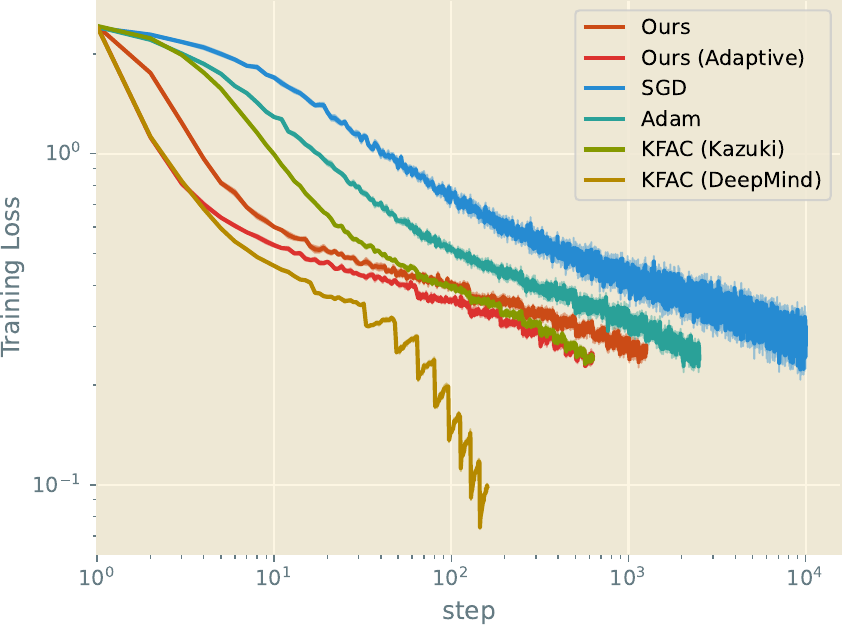}
    \includegraphics[width=0.4\textwidth]{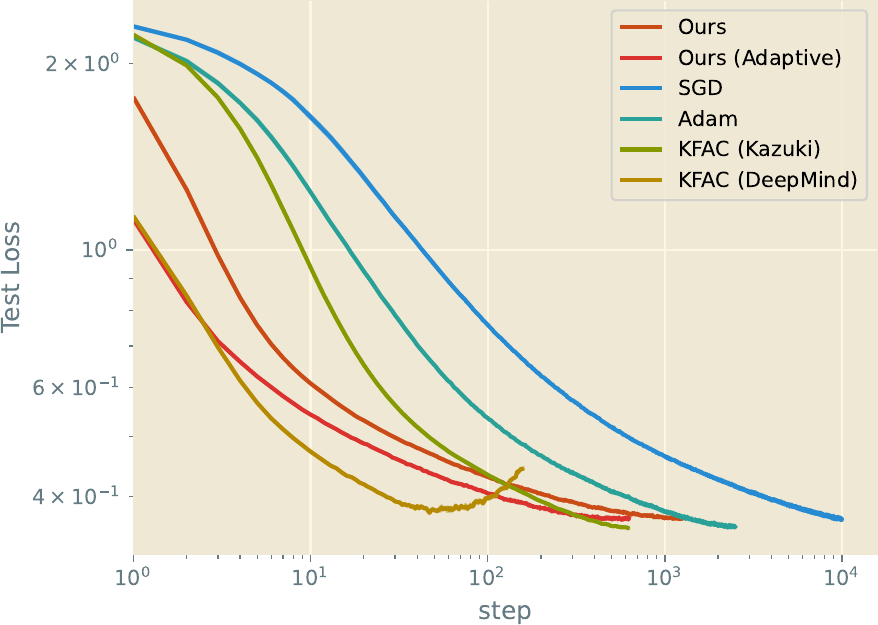}
    \caption[Median test and training losses on UCI~Energy and Fashion-MNIST per iteration]{ Median training (left) and test (right) loss achieved on UCI~Energy (top) and Fashion-MNIST (bottom) plotted per iteration of training. Values are bootstrap-sampled from 50 random seeds. Optimal hyperparameters were tuned with ASHA.
    }
    \label{fig:appch7:uci_energy_fashion}
\end{figure}

\begin{figure}
    \centering
    \includegraphics[width=0.4\textwidth]{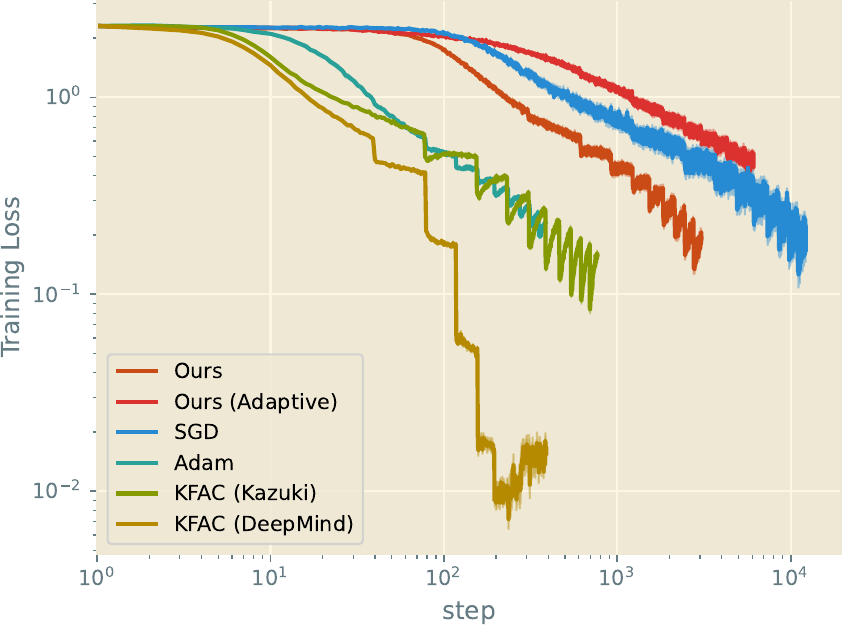}
    \includegraphics[width=0.4\textwidth]{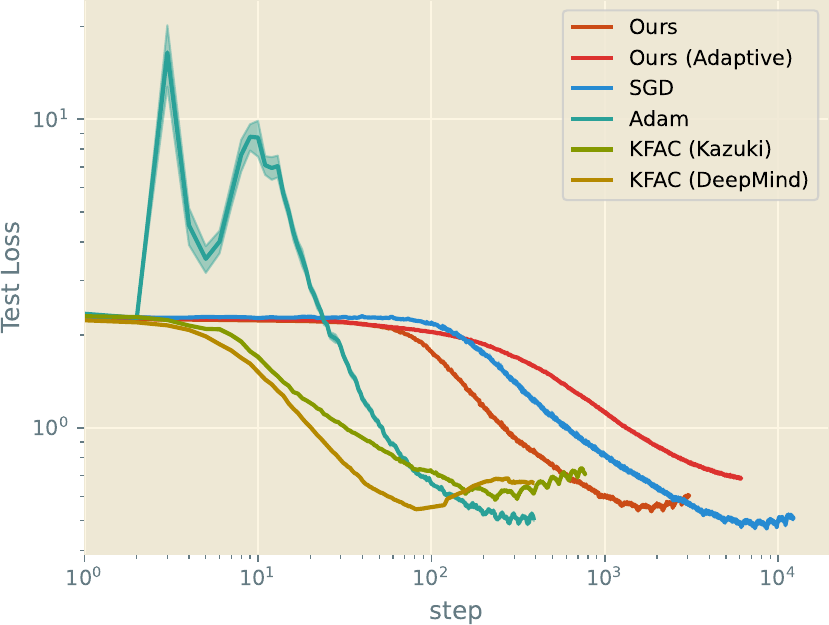}
    \includegraphics[width=0.4\textwidth]{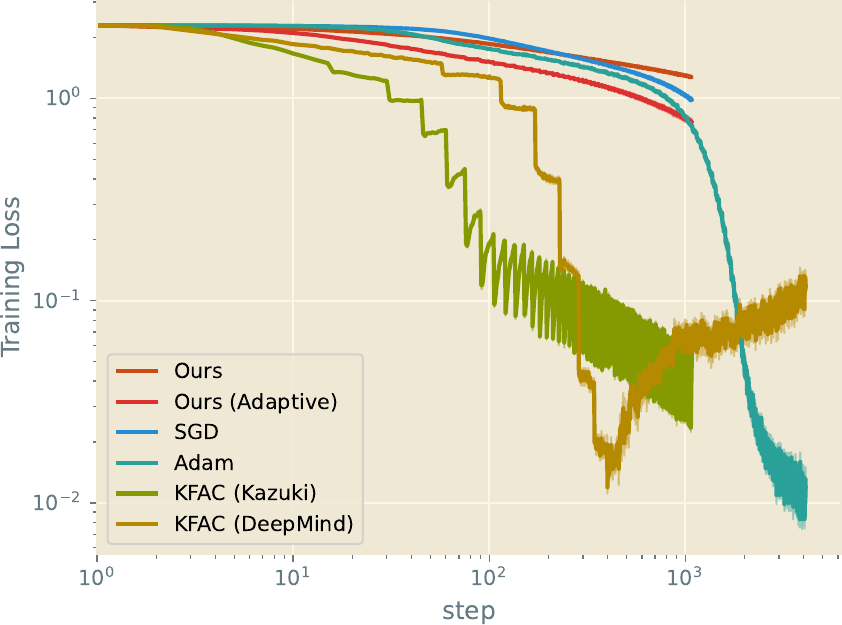}
    \includegraphics[width=0.4\textwidth]{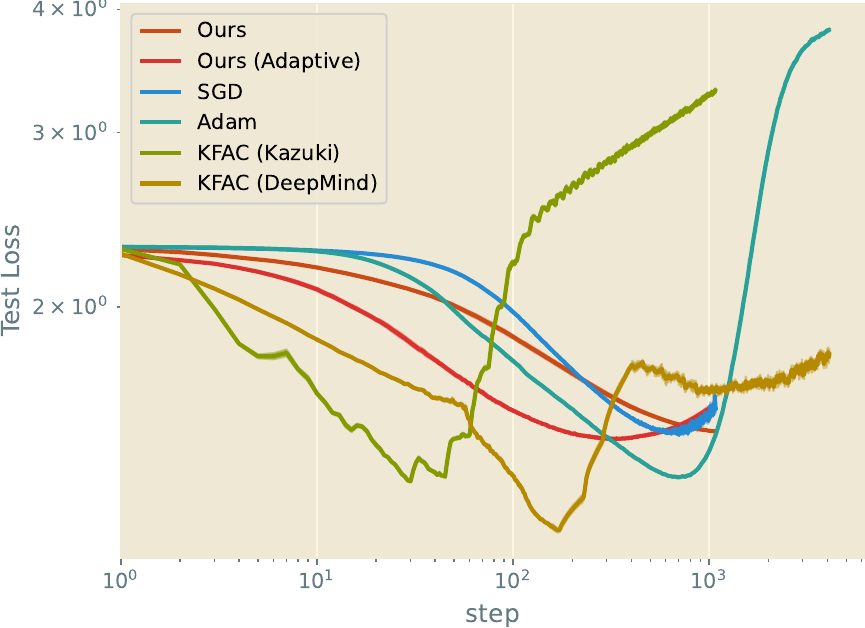}
    \caption[Median test and training losses on SVHN and CIFAR-10 per iteration]{Median training (left) and test (right) loss achieved on SVHN (top) and CIFAR-10 (bottom) plotted per iteration of training. Values are bootstrap-sampled from 50 random seeds. Optimal hyperparameters were tuned with ASHA.}
    \label{fig:appch7:svhn_cifar}
\end{figure}

\subsection{L-BFGS Baseline}
\label{app:lbfgs}
In this section, we include plots for UCI~Energy with L-BFGS \citep{liu_limited_1989} included as an additional baseline. Numerical results for L-BFGS are included in Table~\ref{tab:NumericalResults} as well.

We used the version of L-BFGS in the JAX library (version 0.3.14). We set the number of optimisation steps in the main loop to 20, the maximum number of function evaluations to 25 and the maximum number of Jacobian evaluations to 100.
Since larger values are almost always better for all these parameters, we set them to the largest values within our hardware constraints that allowed for a comparable runtime.

\begin{figure}[h]
    \centering
    \includegraphics[width=0.47\textwidth]{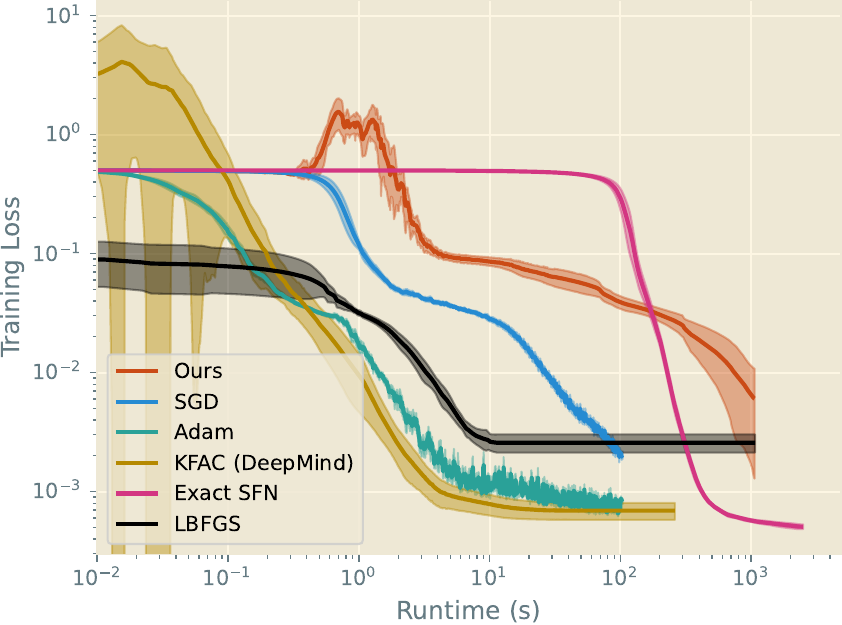}
    \hfill
    \includegraphics[width=0.47\textwidth]{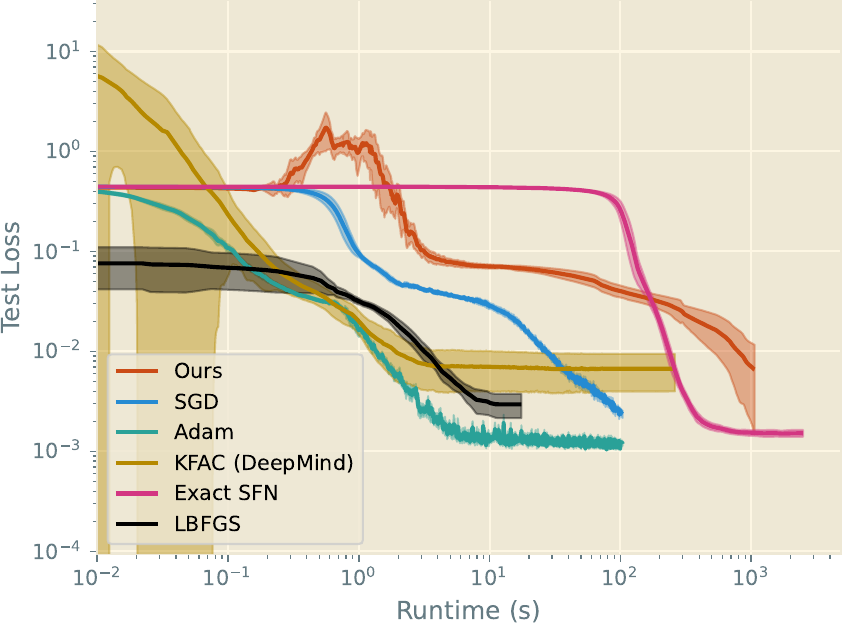}
    \hfill
    \includegraphics[width=0.47\textwidth]{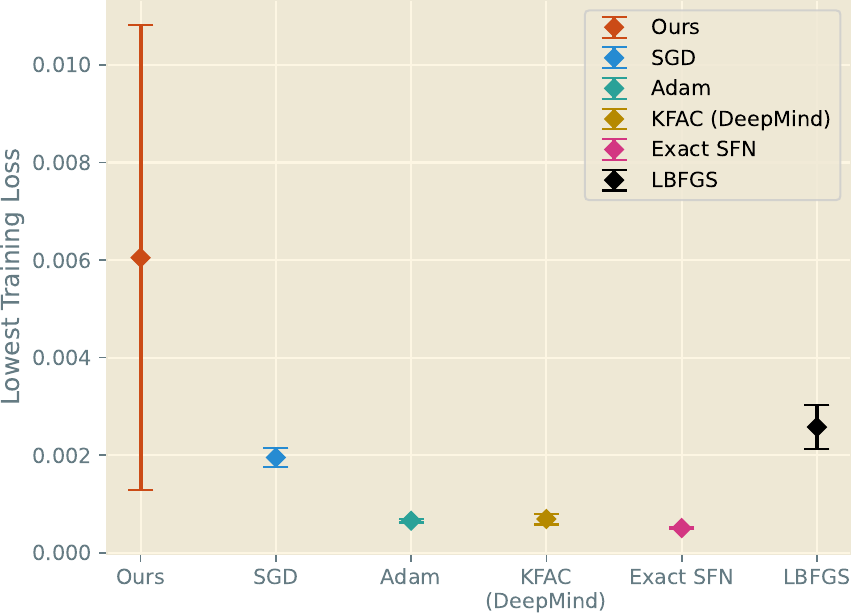}
    \hfill
    \includegraphics[width=0.47\textwidth]{Figures/appendix/uci_energy_rankings_Loss_Training.pdf}
    \caption[Training and test loss on UCI Energy using Exact SFN, Ours, SGD, Adam, KFAC (DeepMind), and L-BFGS]{Median training (left) and test (right) MSEs achieved on UCI~Energy by various optimisers including L-BFGS in the full-batch setting, bootstrap-sampled from 50 random seeds. Optimal hyperparameters were tuned with ASHA. The x-axes are wall-clock time in the top row, and training iteration in the bottom row; note the log-scaling in both cases.}
    \label{fig:uci_energy_profiles_and_rankings_subset_lbfgs}
\end{figure}

\subsection{Effect of Truncation Length}
\label{app:truncation_length}
In practice, we wish to avoid computing many terms from the series approximation to the inverse saddle-free Hessian in Equation~\eqref{eq:VectorSeries}. We may then ask: how many terms are sufficient? Here, we investigate that question empirically by applying our method to UCI~Energy, but varying the number of terms, $K$ used to approximate the series. As shown in Figure~\ref{fig:TruncationLength}, we see clear improvement as the number of computed terms is increased, but even computing only three terms provides a sufficiently close approximation to the saddle-free Hessian for us to reach reasonable loss values. We consider further theoretical justification for this in Appendix~\ref{app:JustifyingTruncation}.

\begin{figure}
    \centering
    \includegraphics[width=0.45\textwidth]{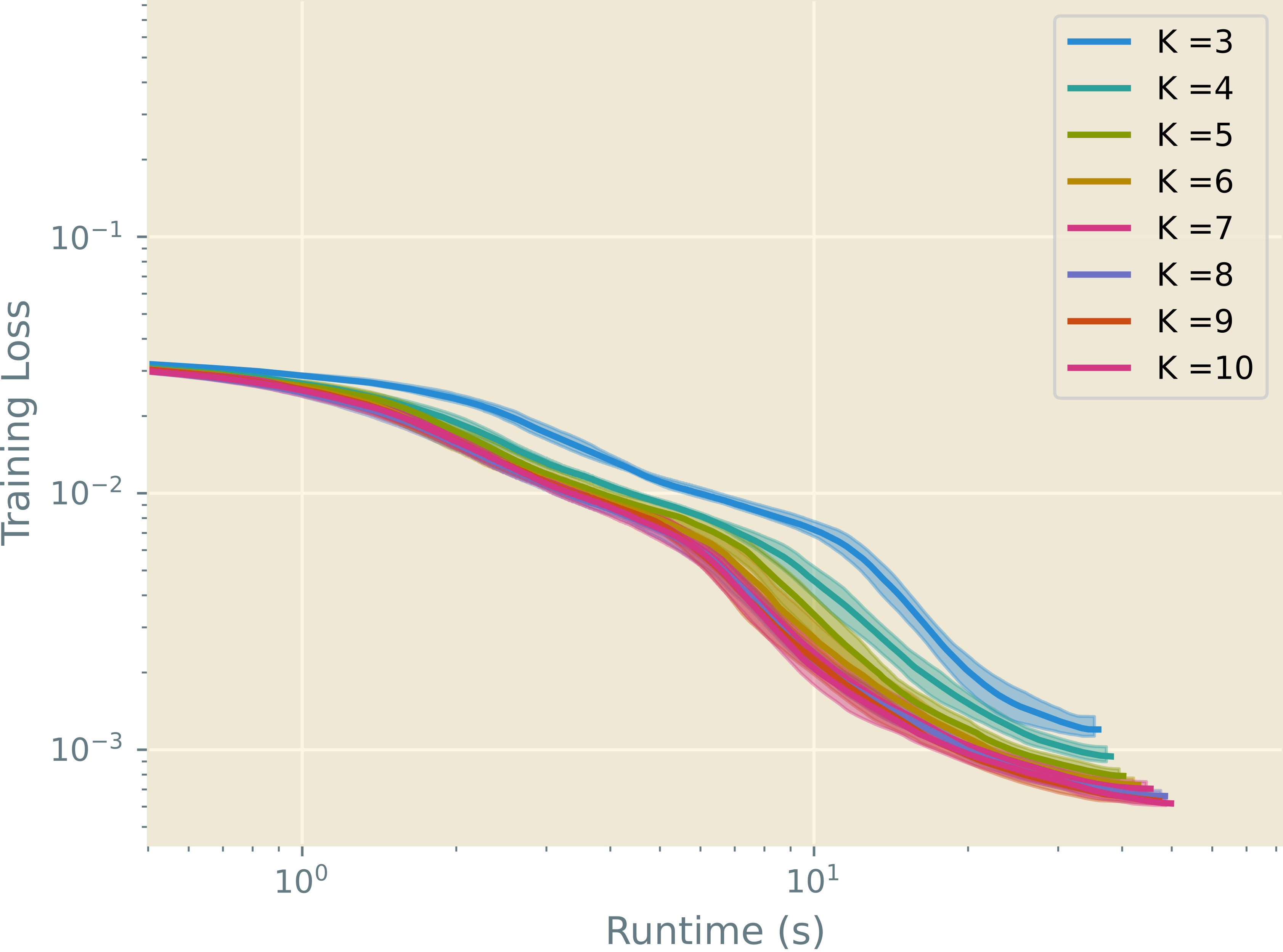}
    \hfill
    \includegraphics[width=0.45\textwidth]{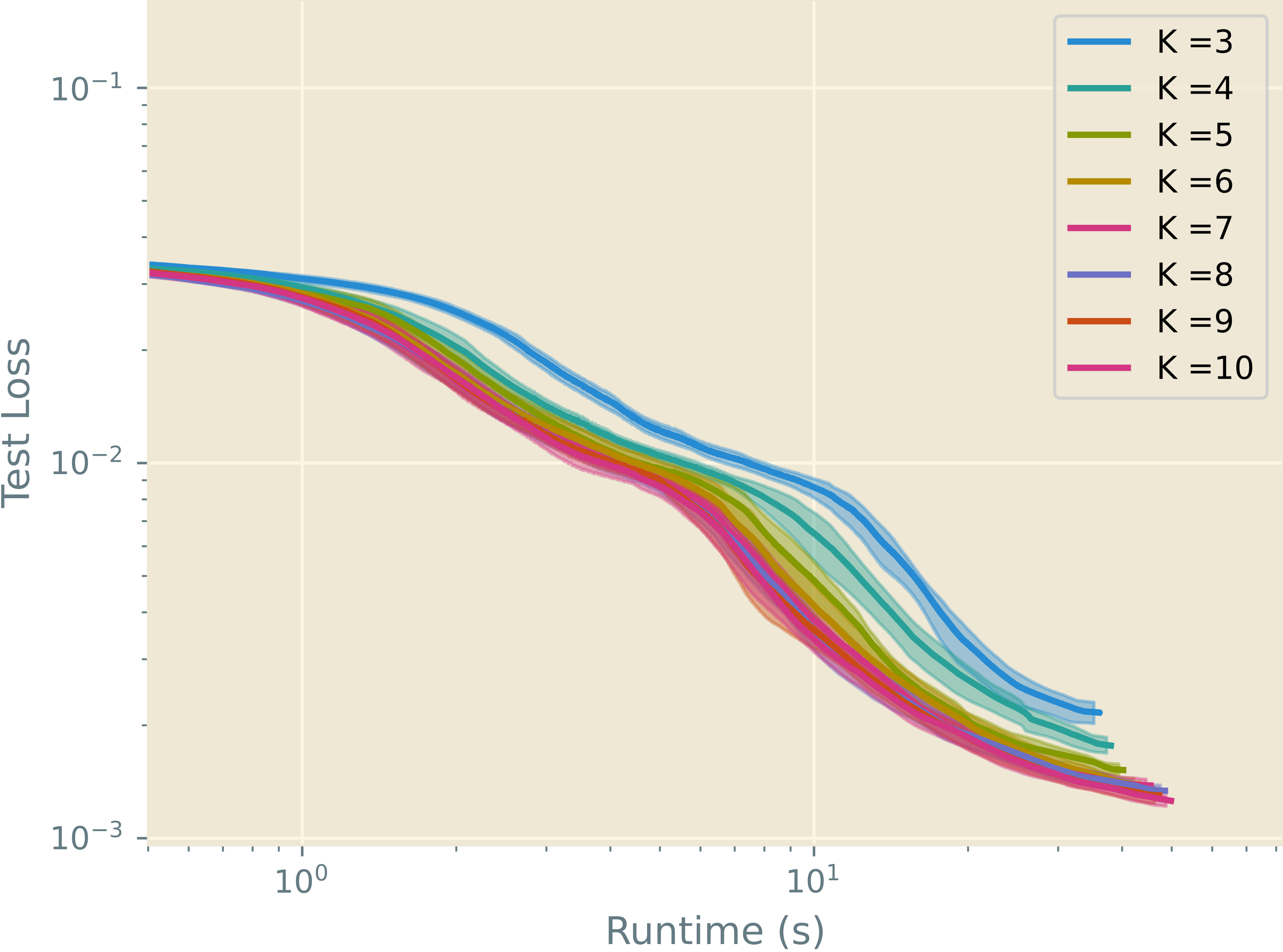}
    \caption[Effect of truncation length on performance]{We consider the effect of varying the number of series terms $K$ used to approximate the saddle-free Hessian in \eqref{eq:VectorSeries}. The number of steps varies from three to ten. All other settings are as in Section~\ref{hessianseries:experiments:uci}, including the acceleration order $N=1$. The results above are bootstrap sampled from 50 different random seeds.}
    \label{fig:TruncationLength}
\end{figure}

\subsection{Comparison of Series Accelerators}
\label{app:series_accelerators}
\begin{figure}
    \centering
    \includegraphics[width=0.45\textwidth]{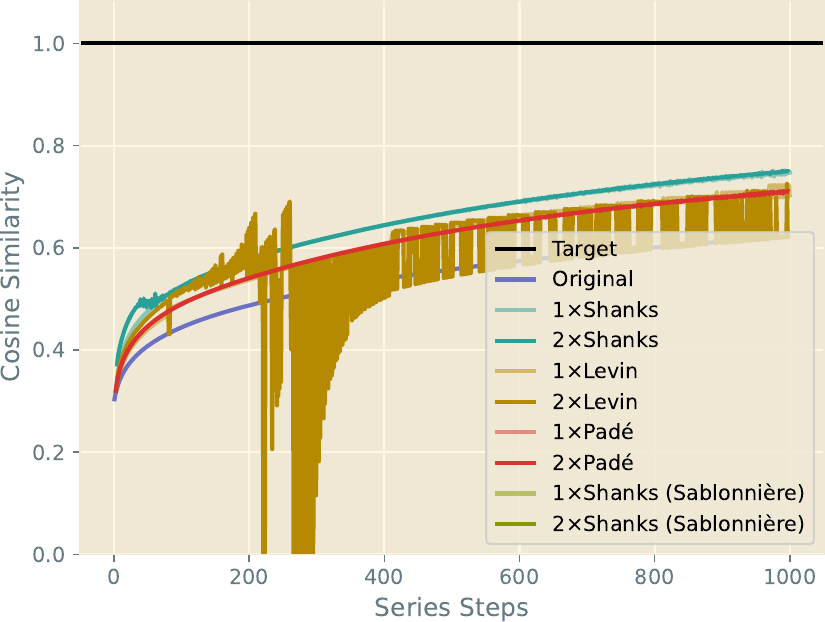}
    \hfill
    \includegraphics[width=0.45\textwidth]{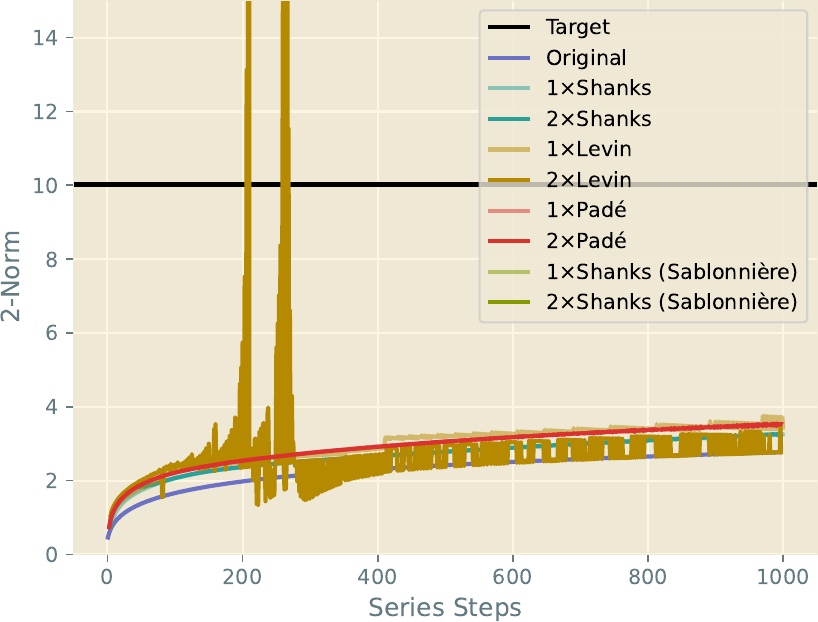}
    \caption[Comparison of series acceleration techniques]{Comparison of series acceleration techniques. From a randomly-initialised Hessian, we compute an exact saddle-free update vector (\mplline{black}), then compare its direction (by cosine similarity; left) and magnitude (by 2-norm; right) with those vectors found by our unmodified series (\mplline{solarizedViolet}) and a variety of accelerators applied to that series, as the latter vary when progressively more series terms are considered.}
    \label{fig:SeriesAcceleratorComparison}
\end{figure}
While developing our algorithm, we considered a range of series accelerators:
\begin{itemize}[noitemsep]
    \item Shanks transformation \citep{schmidt_xxxii_1941, shanks_non-linear_1955}, which is implemented by Wynn's $\epsilon$-algorithm \citep{wynn_device_1956}
    \item \citet{sablonniere_comparison_1991} modification of the Wynn $\epsilon$-algorithm
    \item Levin-$t$ transform \citep{levin_development_1972}, which we found more stable than the related $u$ and $v$ transforms
    \item Padé approximants \citep{graves-morris_review_1994}
\end{itemize}
We also investigated the vector- and topological-$\rho$ accelerators \citep{wynn_procrustean_1956, osada_acceleration_1991} and the $d^{(2)}$ transformation \citep{levin_two_1981, osada_vector_1996}, but found these to be markedly less robust, so do not show results here.

To investigate the relative merits of these accelerators, we randomly populate a 100-dimensional Hessian matrix $\vec{H}$ with independent draws from a standard normal distribution, from which we compute the exact vector $\left( \sqrt[+]{\vec{H}^2} \right)^{-1} \vec{g}$ at some random starting point. With this exact target in mind, we compute 1\,000 steps of our approximating series, then apply each acceleration algorithm in turn up to four times, comparing the resulting directional error (by cosine magnitude) and magnitude error (by 2-norm) of our update step. Since the differences between accelerators dwarfed those between different acceleration orders of the same accelerator, we show only the acceleration orders $N=1,2$ in our results (Figure~\ref{fig:SeriesAcceleratorComparison}). Note that Shanks acceleration and its Sablonnière modification are indistinguishable at the scale of these plots.

From these plots, we observe Shanks acceleration, and its Sablonnière modification, to reliably converge faster towards the correct update direction than the other accelerators, although Padé acceleration marginally beats these when we compare update magnitudes. Every accelerator makes progress faster than the original series, though Levin-$t$ acceleration seems insufficiently robust for our purposes. 

As no accelerator comes particularly close to our target vector, there clearly remains some improvement to be made at managing this series' convergence. Noting that the greatest acceleration benefit is seen for early series steps, we choose to focus on this window. Further, since the update magnitude is generally \emph{under}estimated, we prioritise the update direction through the cosine similarity metric, as we expect too-short steps in the correct direction to retain stable optimisation behaviour. Subjectively, Sablonnière's modification of Shanks' algorithm was slightly more stable in our experiments, so we select this accelerator to use in this paper.

\clearpage
\section{Detailed Derivations}
\label{app:DetailedDerivations}
In this Section, we provide a more verbose derivation of the key results of Section~\ref{sec:Derivations}.

\subsection{Scalar Inverse Square-Root Series}
The generalised binomial theorem provides a means of writing the quantity $(x+y)^r$ as the infinite series
\begin{equation}
    (x+y)^r = \sum_{k=0}^\infty {r \choose k} x^{r-k} y^{k},
\end{equation}
where the generalisation admits any complex $r$ using the definition
\begin{equation}
  {r \choose k} = \frac{r(r-1)(r-2) \cdots (r-k+1)}{k!}.
\end{equation}

In particular, we have
\begin{align}
  (1 - z)^{-\frac{1}{2}}
  &= \sum_{k=0}^{\infty} {-\frac{1}{2} \choose k} (-z)^{k} \\
  &= \sum_{k=0}^{\infty} \frac{(-\frac{1}{2}) (-\frac{1}{2} - 1) (-\frac{1}{2} - 2) \cdots (-\frac{1}{2} - k + 1)}{k!} (-1)^{k} z^{k} \\
  &= \sum_{k=0}^{\infty} \frac{(\frac{1}{2}) (\frac{1}{2} + 1) (\frac{1}{2} + 2) \cdots (\frac{1}{2} + k - 1)}{k!} (-1)^{2k} z^{k} \\
  &= \sum_{k=0}^{\infty} \frac{(\frac{1}{2}) (\frac{1}{2} + 1) (\frac{1}{2} + 2) \cdots (\frac{1}{2} + k - 1)}{k!} z^{k} \\
  &= \sum_{k=0}^{\infty} \frac{(1) (1+2) (1+4) \cdots (2k-1)}{k!} \frac{1}{2^{k}} z^{k} \\
  &= \sum_{k=0}^{\infty} \frac{(2k-1)!}{k!(2k-2)(2k-4)(2k-6)\cdots(2)} \frac{1}{2^{k}} z^{k} \\
  &= \sum_{k=0}^{\infty} \frac{(2k-1)!}{k!(k-1)(k-2)(k-3)\cdots(1)} \frac{1}{2^{k-1}2^{k}} z^{k} \\
  &= \sum_{k=0}^{\infty} \frac{(2k-1)!}{k!(k-1)!} \frac{1}{2^{2k-1}} z^{k} \\
  &= \sum_{k=0}^{\infty} \frac{(2k)!}{k!k!} \frac{k}{2k} \frac{1}{2^{2k-1}} z^{k} \\
  &= \sum_{k=0}^{\infty} \frac{1}{2^{2k}} {2k \choose k} z^{k}
  \label{eq:ScalarSeries}
\end{align}

\subsection{Series Convergence}
Denote by $a_{k}$ the $k$th term of the summation of \eqref{eq:ScalarSeries}.
For this series to be convergent, it suffices that
\begin{equation}
  \limsup_{\limpower \to \infty} |a_{\limpower}|^{\frac{1}{\limpower}} < 1
\end{equation}
by the root test. Applying this test to our series yields
\begin{align}
  \limsup_{\limpower \to \infty} |a_{\limpower}|^{\frac{1}{\limpower}}
  &= \limsup_{\limpower \to \infty} \left| \frac{1}{2^{2\limpower}} {2\limpower \choose \limpower} z^{\limpower} \right|^{\frac{1}{\limpower}} \\
  &= \limsup_{\limpower \to \infty} \frac{1}{2^{2}} \left( \frac{(2\limpower)!}{\limpower!\limpower!} \right)^{\frac{1}{\limpower}} \left| z^{\limpower} \right|^{\frac{1}{\limpower}} \\
  &= \frac{1}{4} \limsup_{\limpower \to \infty} \left( \frac{(2\limpower)!}{\limpower!\limpower!} \right)^{\frac{1}{\limpower}} \left| z^{\limpower} \right|^{\frac{1}{\limpower}} \\
  &\leq \frac{1}{4} \limsup_{\limpower \to \infty} \left( \frac{(2\limpower)^{2\limpower}}{\limpower^{2\limpower}} \right)^{\frac{1}{\limpower}} \left| z^{\limpower} \right|^{\frac{1}{\limpower}} \\
  &= \frac{1}{4} \limsup_{\limpower \to \infty} \frac{4\limpower^{2}}{\limpower^{2}} \left| z^{\limpower} \right|^{\frac{1}{\limpower}} \\
  &= \limsup_{\limpower \to \infty} \left| z^{\limpower} \right|^{\frac{1}{\limpower}} < 1
\end{align}
Thus, for the series to converge, it is sufficient that
$\limsup_{\limpower \to \infty} \left| z^{\limpower} \right|^{\frac{1}{\limpower}} < 1$.

\subsection{Matrix Extension and Scaling}
This series extends naturally to the matrix case by choosing a square matrix to substitute for $z$ and replacing $1$ with the appropriately-sized identity matrix $\vec{I}$. Ideally, we would choose $z = \vec{I} - \vec{H}^2$ and immediately recover a series expression for $\left( \vec{H}^2 \right)^{-\frac{1}{2}}$. However, as we will observe, such a $z$ will not allow the series to converge for arbitrary $\vec{H}$, so we will instead introduce a scaling factor $V$ and write $z = \vec{I} - \frac{1}{V} \vec{H}^2$.

The scalar convergence condition
$\limsup_{\limpower \to \infty} \left| z^{\limpower} \right|^{\frac{1}{\limpower}} < 1$ generalises naturally
to the matrix case. Let $\norm{\cdot}$ denote any compatible sub-multiplicative matrix norm ---
that is, one which satisfies
$\norm{\vec{A}\vec{B}} \leq \norm{\vec{A}} \norm{\vec{B}}$ and
$\norm{\vec{A} \vec{x}} \leq \norm{\vec{A}} \norm{\vec{x}}$ for all
dimensionally-compatible matrices $\vec{A}, \vec{B}$ and vectors $\vec{x}$. This
definition includes all matrix norms induced by vector norms. Then, the convergence
condition becomes $\limsup_{\limpower \to \infty} \norm{z^{\limpower}}^{\frac{1}{\limpower}} < 1$.

Collecting these extensions, we recover the series
\begin{equation}
    (\vec{H}^{2})^{-\frac{1}{2}} = V^{-\frac{1}{2}} \sum_{k=0}^{\infty} \frac{1}{2^{2k}} {2k \choose k} \left( \vec{I} - \frac{1}{V} \vec{H}^{2} \right)^{k}
    \label{eq:NormalisedSeries}
\end{equation}
and the convergence condition
\begin{equation}
      \limsup_{\limpower \to \infty} \norm{z^{\limpower}}^{\frac{1}{\limpower}}
  = \limsup_{\limpower \to \infty} \left\Vert \left(\vec{I} - \frac{1}{V} \vec{H}^{2}\right)^{\limpower} \right\Vert^{\frac{1}{\limpower}} < 1.
\end{equation}

Gelfand's formula gives that, for any matrix norm,
$\limsup_{\limpower \to \infty} \left\Vert \left(\vec{I} - \frac{1}{V} \vec{H}^{2}\right)^{\limpower} \right\Vert^{\frac{1}{\limpower}}$
is equal to the spectral radius of $\vec{I} - \frac{1}{V} \vec{H}^{2}$. Since
we are working with real, symmetric matrices, their eigenvalues are all real, whence the spectral
radius is simply the largest of the absolute values of the eigenvalues of
$\vec{I} - \frac{1}{V} \vec{H}^{2}$.

Let $\lambda$ be an arbitrary eigenvalue of $\vec{H}^{2}$. By reference to the
eigendecomposition of $\vec{H}^{2}$, the corresponding eigenvalue of
$\vec{I} - \frac{1}{V} \vec{H}^{2}$ is $1 - \frac{1}{V}\lambda$. Thus, for the
spectral radius of $\vec{I} - \frac{1}{V} \vec{H}^{2}$ to be less than unity, we
require for all eigenvalues $\lambda$ of $\vec{H}^{2}$ that
\begin{align}
  -1 < 1 - \frac{1}{V} \lambda < 1\\
  \implies 0 < \lambda < 2V.
\end{align}

Now, since $\vec{H}^{2}$ is positive semi-definite by construction, we have
$\lambda \geq 0$, and our implicit assumption of the invertibility of $\vec{H}$ (and
hence $\vec{H}^{2}$) gives $\lambda \neq 0$, whence we recover $\lambda > 0$ as required. For
the upper bound, it suffices to consider only the largest eigenvalue of
$\vec{H}^{2}$, which we denote by $\lambda_{\mathrm{max}}$. We thus secure convergence
by the condition
\begin{equation}
  V > \frac{1}{2} \lambda_{\mathrm{max}}.
\end{equation}

We would prefer to compute this bound on $V$ without explicit reference to the
largest eigenvalue of $\vec{H}^{2}$, which may be expensive to compute in
general. Instead, let $\vec{u}_{\mathrm{max}}$ be the corresponding eigenvector
of $\lambda_{\mathrm{max}}$. Then, by sub-multiplicativity of the matrix norm, we have
\begin{align}
  \norm{\vec{H}^{2}} \norm{\vec{u}_{\mathrm{max}}} &\geq \norm{\vec{H}^{2} \vec{u}_{\mathrm{max}}} \\
  &= \norm{\lambda_{\mathrm{max}} \vec{u}_{\mathrm{max}}} \\
  &= \lambda_{\mathrm{max}} \norm{\vec{u}_{\mathrm{max}}} \\
  \implies \norm{\vec{H}^{2}} &\geq \lambda_{\mathrm{max}}.
\end{align}
So for convergence of the series, it is sufficient that, for any
sub-multiplicative norm $\norm{\cdot}$:
\begin{equation}
  V > \frac{1}{2} \norm{\vec{H}^{2}}.
\end{equation}

\subsection{Principality of Square Root}
Recall that the principal square root of $\vec{H}^{2}$ is positive semi-definite
by construction. The inverse of the principal square root, where it exists, must
also then be positive semi-definite. So if the result of our series is positive
semi-definite, it must have computed the principal square root.

Consider again our series from \eqref{eq:NormalisedSeries}:
\begin{equation}
  (\vec{H}^{2})^{-\frac{1}{2}} = V^{-\frac{1}{2}} \sum_{k=0}^{\infty} \frac{1}{2^{2k}} {2k \choose k} \left( \vec{I} - \frac{1}{V} \vec{H}^{2} \right)^{k}.
\end{equation}
Under our convergence condition
$V > \frac{1}{2} \lambda_{\mathrm{max}} \leq \frac{1}{2} \norm{\vec{H}^{2}}$, we have
that the eigenvalues of $\vec{I} - \frac{1}{V} \vec{H}^{2}$ all fall within
$(-1, 1)$. However, if we strengthen the bound on $V$ to $V > \lambda_{\mathrm{max}}$,
reprising the argument of the previous section gives that the eigenvalues of
$\vec{I} - \frac{1}{V} \vec{H}^{2}$ must fall within $[0, 1]$, making
$\vec{I} - \frac{1}{V} \vec{H}^{2}$ positive semi-definite. But then
$\left( \vec{I} - \frac{1}{V} \vec{H}^{2} \right)^{k}$ is also positive
semi-definite for $k = 0, 1, 2, \cdots$. This means our series is a linear
combination of positive semi-definite matrices with positive coefficients, so
the summation --- even when truncated to a finite number of terms --- must be
positive semi-definite. Thus, our construction has computed the inverse of the
principal square root of $\vec{H}^{2}$, as required, when we use the stronger
condition
\begin{align}
  V &> \lambda_{\mathrm{max}}\\
  \Longleftarrow V &> \norm{\vec{H}^{2}}.
\end{align}

\clearpage
\section{Algorithm Analysis}
\label{app:AlgorithmAnalysis}

\subsection{Choice of Scaling Factor}
\label{app:v_choice}
Since $V$ exists only to suitably scale the matrix $\vec{H}^{2}$, we have some
freedom in its choice of value. We hypothesise that a smaller $V$ (representing
a tighter fit of our convergence bound) would best mitigate any issues with
numerical precision, as this avoids rescaling values more than necessary.
Although we also hypothesise, based on results for the scalar series, that a
larger $V$ would ensure more rapid convergence of the series, our subsequent
rescaling outside the summation most likely eliminates any gains here. Thus, we
seek a $V$ which satisfies our bound $V > \lambda_{\mathrm{max}} \leq \norm{\vec{H}^{2}}$
as tightly as possible, but which may be calculated without excessive
computational cost.

A naïve approach is to note that $\tr (\vec{H}^{2})$ is the sum of the
(guaranteed non-negative) eigenvalues of $\vec{H}^{2}$, so is certainly an upper
bound on the largest. Denoting the dimensionality of $\vec{H}$ by $m \times m$, we then have
\begin{align}
  \tr (\vec{H}^{2}) \tr \left( \vec{I}^{2} \right) &\leq (\tr \vec{H})^{2} (\tr \vec{I})^{2} \\
  m \tr (\vec{H}^{2}) &\leq m^{2} (\tr \vec{H})^{2} \\
  \implies \tr (\vec{H}^{2}) &\leq m (\tr \vec{H})^{2},
\end{align}
so it suffices to set $V = m (\tr \vec{H})^{2}$. Since the diagonal elements of
$\vec{H}$ are the unmixed second derivatives, we can compute them efficiently by
differentiating every element of the gradient vector $\vec{g}$ with respect to
its corresponding weight parameter, and thus compute $\tr \vec{H}$ without
explicitly computing $\vec{H}$. However, we find this bound to be extremely loose in practice, and thus detrimental to performance.

Another approach to a lower bound is to note that sub-multiplicativity of the matrix norm gives
\begin{align}
  \norm{\vec{H}^{2} \vec{g}} &\leq \norm{\vec{H}^{2}} \norm{\vec{g}} \\
  \implies \norm{\vec{H}^{2}} &\geq \frac{\norm{\vec{H}^{2} \vec{g}}}{\norm{\vec{g}}}.
\end{align}
Since our algorithm already computes $\vec{H}^{2} \vec{g}$, this allows
us to efficiently compute a lower bound on $V$ based on our condition:
\begin{equation}
  V \geq \frac{\norm{\vec{H}^{2} \vec{g}}}{\norm{\vec{g}}}.
  \label{eq:VBound}
\end{equation}
In practice, the algorithm is initialised with some initial value of $V$ (specifically $V=100$ in our experiments) which is then increased to $\frac{\norm{\vec{H}^2 \vec{g}}}{\norm{\vec{g}}}$ whenever the bound in \eqref{eq:VBound} is violated.


\subsection{Justification of the Truncated Series}
\label{app:JustifyingTruncation}
We have shown that our infinite series \eqref{eq:NormalisedSeries} converges to the required transformed Hessian, but clearly we will be forced to truncate the series to $K$ terms in practical implementation. In this subsection, we informally justify the appropriateness of this truncation.

Restating \eqref{eq:VectorSeries},
\begin{equation}
    (\vec{H}^2)^{-\frac{1}{2}} \vec{g} = \frac{1}{\sqrt{V}} \sum_{k=0}^{\infty} \frac{1}{2^{2k}} {2k \choose k} \left( \vec{I} - \frac{1}{V} \vec{H}^2 \right)^k \vec{g},
\end{equation}
and recalling we denote the $k$th term of the summation by $\vec{a}_k$, we have from Algorithm~\ref{alg:Training} that
\begin{equation}
    \vec{a}_0 = \vec{g}, \qquad
    \vec{a}_{k+1} = \frac{4k^2 - 2k}{4k^2} \left( \vec{I} - \frac{1}{V} \vec{H}^2 \right) \vec{a}_{k}.
\end{equation}
Now, for $k = 0, 1, 2, \cdots$, we have $\frac{4k^2 - 2k}{4k^2} < 1$, and we have $\norm{\vec{I} - \frac{1}{V} \vec{H}^2} < 1$ by construction in order to secure convergence. It follows that
\begin{equation}
    \left\Vert \left( \vec{I} - \frac{1}{V} \vec{H}^2 \right)^k \right\Vert < 1
\end{equation}
for $k = 0, 1, 2, \cdots$. Thus, we have $\norm{\vec{a}_{k+1}} < \norm{\vec{a}_{k}}$ for such $k$, as suggested by the convergence property of our series, and we can describe the sequence of terms of the summation to be monotonically decreasing in magnitude. It is thus justifiable to suppose that, if we wish to take finitely many terms of the series, we should prioritise the earlier terms (smaller $k$), since these will have the greatest impact on the summation.

To develop further insight into this behaviour, recall we exploited the real, symmetric nature of $\vec{H}$ to eigendecompose it as $\vec{H} = \vec{Q} \vec{\Lambda} \vec{Q}\trans$. Substituting this decomposition into our series gives
\begin{align}
    (\vec{H}^2)^{-\frac{1}{2}} \vec{g}
    &= \frac{1}{\sqrt{V}} \sum_{k=0}^{\infty} \frac{1}{2^{2k}} {2k \choose k} \left( \vec{I} - \frac{1}{V} \vec{H}^2 \right)^k \vec{g}\\
    &= \frac{1}{\sqrt{V}} \sum_{k=0}^{\infty} \frac{1}{2^{2k}} {2k \choose k} \left( \vec{I} - \frac{1}{V} (\vec{Q} \vec{\Lambda} \vec{Q}\trans)^2 \right)^k \vec{g}\\
    &= \frac{1}{\sqrt{V}} \sum_{k=0}^{\infty} \frac{1}{2^{2k}} {2k \choose k} \left( \vec{Q}\vec{Q}\trans - \frac{1}{V} \vec{Q} \vec{\Lambda}^2 \vec{Q}\trans \right)^k \vec{g}\\
    &= \frac{1}{\sqrt{V}} \sum_{k=0}^{\infty} \frac{1}{2^{2k}} {2k \choose k} \left( \vec{Q} \left( \vec{I} - \frac{1}{V} \vec{\Lambda}^2\right) \vec{Q}\trans \right)^k \vec{g}\\
    &= \frac{1}{\sqrt{V}} \vec{Q} \sum_{k=0}^{\infty} \frac{1}{2^{2k}} {2k \choose k} \left( \vec{I} - \frac{1}{V} \vec{\Lambda}^2 \right)^k \vec{Q}\trans \vec{g}.
\end{align}

Since $\vec{I}$ and $\vec{\Lambda}$ are diagonal matrices, this series is actually a parallel combination of independent scalar series, and we can consider each diagonal component individually. For an arbitrary eigenvalue $\lambda$, this gives
\begin{equation}
    \sum_{k=0}^{\infty} \frac{1}{2^{2k}} {2k \choose k} \left( 1 - \frac{1}{V} \lambda^2 \right)^k
\end{equation}

Now, we specifically chose $V$ to be larger than the greatest eigenvalue magnitude of $\vec{H}^2$. Since we also assumed $\vec{H}^2$ has only positive eigenvalues, we can say $0 < 1 - \frac{1}{V} \lambda < 1$. This common ratio will be near zero for the largest eigenvalues $\lambda$, so we will see the most rapid convergence of these components of the series. Similarly, the common ratio will be near unity when $\lambda$ is near zero, so we will see the slowest convergence in these components.

This result allows us to consider the high- and low-eigenvalue components of the transformed Hessian independently. High-curvature directions in the space, indicated by large eigenvalues, will converge relatively quickly, so we expect the earlier terms of the series to be of most use in approximating these curvatures. As $k$ increases, the main contribution of each term is towards progressively smaller eigenvalues, representing lower-curvature regions of the space. Thus, the more-impactful higher-curvature information is addressed predominantly towards the start of the series, so even if we only consider finitely many terms, we can be sure none of the first $K$ terms could more optimally be replaced by a later term.

\subsection{Convergence and Escape}
\label{sec:ConvergenceAndEscape}
\setcounter{theorem}{0}
\setcounter{assumption}{0}

We follow the proof of \citet{paternain_newton-based_2019} to prove that in the neighbourhood of a critical point, our method will converge to the critical point in locally convex directions and move away from the critical point in locally concave directions. Let $\vec{x}_C$ be any critical point, and define the immediate vicinity of $\vec{x}_C$ by the closed $\beta'$-ball $\mathcal{Q} = \{ \vec{x} \in \mathbb{R}^\numparams \mid \norm{\vec{x} - \vec{x}_C} \leq \beta' \}$ for some $\beta'$. We require the following assumptions:

\begin{assumption}
    \label{asssumption:Lipschitz}
    Over $\mathcal{Q}$, the loss function $f(\vec{x})$ is twice continuously differentiable, and further the gradient $\vec{g}(\vec{x})$ and Hessian $\vec{H}(\vec{x})$ are Lipschitz continuous. Specifically, there exist constants $M, L > 0$ such that for any $\vec{x}, \vec{y} \in \mathcal{Q} \subset \mathbb{R}^\numparams$
    \begin{align}
        \norm{\vec{g}(\vec{x}) - \vec{g}(\vec{y})} &\leq M \norm{\vec{x} - \vec{y}} \\
        \norm{\vec{H}(\vec{x}) - \vec{H}(\vec{y})} &\leq L \norm{\vec{x} - \vec{y}}
    \end{align}
\end{assumption}


\begin{assumption}
    \label{assumption:InvertibleHessian}
    The hessian $\vec{H}(\vec{x})$ is invertible over $\mathcal{Q}$. Specifically, there exists a $\delta > 0$ such that $|\lambda_i(\vec{H}(\vec{x}))| > \delta$ for all $\vec{x} \in \mathcal{Q} \subset \mathbb{R}^\numparams$ and $i = 1, 2, ..., \numparams$. This additionally implies non-degeneracy of the saddle point.
\end{assumption}

%
We note that \citet{paternain_newton-based_2019} also require Assumption \ref{asssumption:Lipschitz}, though they assume a weaker form of Assumption \ref{assumption:InvertibleHessian}, namely that the  $|\lambda_i(\vec{H}(\vec{x}))| > \delta$ must hold at all local minima and saddle points, rather than in a $\beta'$-ball around local minima and saddle points. In practice, applying damping to the Hessian ensures that this assumption holds.

We also assume that our series approximation to the inverted saddle-free Hessian in Equation~\eqref{eq:VectorSeries} has converged. We use the notation $\overline{\vec{A}}$ to denote the matrix obtained by taking the absolute value of each eigenvalue of $\vec{A}$ and note that the saddle-free Hessian, $\overline{\vec{H}}$ is thus written as $\overline{\vec{H}}(\vec{x})^{-1} = \left(\vec{Q} \overline{\vec{\Lambda}} \vec{Q}\trans\right)^{-1}$, where $\vec{Q} \vec{\Lambda} \vec{Q}\trans$ is the eigendecomposition of $\vec{H}(\vec{x})$. Without loss of generality, we shall assume the eigenvalues to be arranged in ascending order, such that $\lambda_1 \leq \lambda_2 \leq \cdots \leq \lambda_\numparams$ and the $i$th column of $\vec{Q}$ is the eigenvector associated with eigenvalue $\lambda_i$.

Recall the critical point of interest is $\vec{x}_C$. We let $\vec{g}_{+}(\vec{x})$ denote the gradient at $\vec{x}$ projected onto the subspace of $\vec{H}(\vec{x}_C)$'s eigenvectors associated with the positive eigenvalues of  $\vec{H}(\vec{x}_C)$. Similarly, let $\vec{g}_{-}(\vec{x})$ denote the projection of $\vec{g}(\vec{x})$ onto the subspace defined by the eigenvectors of $\vec{H}(\vec{x}_C)$ associated with negative eigenvalues. 
We now go on to prove that given the assumptions above, for a point $\vec{x}_\trainiter$ that is in the neighbourhood $\mathcal{Q}$ of a critical point $\vec{x}_C$, our method will converge in the subspace corresponding to the positive eigenvalues of $\vec{H}(\vec{x}_C)$ and escape in the subspace corresponding to the negative eigenvalues of $\vec{H}(\vec{x}_C)$. In other words, we show that $\norm{\vec{g}_{+}(\vec{x}_{\trainiter+1})}$ converges to zero and that $\norm{\vec{g}_{-}(\vec{x}_{\trainiter+1})}$ will grow.

\begin{theorem}
    Given Assumptions~\ref{asssumption:Lipschitz} and \ref{assumption:InvertibleHessian}, suppose that $\norm{\vec{x}_\trainiter - \vec{x}_C} < \beta \norm{\vec{g}(\vec{x}_\trainiter)}$ where $\vec{x}_C$ is a critical point and let $\overline{\vec{H}}(\vec{x})$, $\vec{g}_+(\vec{x})$ and $\vec{g}_-(\vec{x})$ be defined as above. Let $\norm{\cdot}$ denote a sub-multiplicative norm. Then both the following inequalities hold:
    \begin{align}
        \norm{\vec{g}_+(\vec{x}_{\trainiter+1})} &\leq D \norm{\vec{g}(\vec{x}_\trainiter)}^2 
        \label{eq:TheoremPositive} \\
        \norm{\vec{g}_-(\vec{x}_{\trainiter+1})} &\geq 2 \norm{\vec{g}_-(\vec{x}_{\trainiter})} -  D \norm{\vec{g}(\vec{x}_\trainiter)}^2
        \label{eq:TheoremNegative}
    \end{align}
    where $D = \frac{L C_\delta^2 + 4 L C_\delta \beta}{2}$.
\end{theorem}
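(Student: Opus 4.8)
I would follow \citet{paternain_newton-based_2019}. First note that, once the series \eqref{eq:VectorSeries} has converged, Algorithm~\ref{alg:Training} (taken with unit step size, as the stated constants implicitly require) performs the exact Saddle-Free Newton step $\vec{x}_{\trainiter+1} = \vec{x}_\trainiter - \overline{\vec{H}}(\vec{x}_\trainiter)^{-1}\vec{g}(\vec{x}_\trainiter)$. Writing the eigendecomposition $\vec{H}(\vec{x}_\trainiter) = \vec{Q}\vec{\Lambda}\vec{Q}\trans$, we have $\overline{\vec{H}}(\vec{x}_\trainiter)^{-1} = \vec{Q}|\vec{\Lambda}|^{-1}\vec{Q}\trans$, so the linearised part of the update is a \emph{reflection} in the locally concave directions:
\[
\vec{I} - \vec{H}(\vec{x}_\trainiter)\,\overline{\vec{H}}(\vec{x}_\trainiter)^{-1} = \vec{Q}\bigl(\vec{I} - \sign\vec{\Lambda}\bigr)\vec{Q}\trans = 2\,\vec{\Pi}_\trainiter^{-},
\]
where $\vec{\Pi}_\trainiter^{-}$ is the orthogonal projector onto the span of the eigenvectors of $\vec{H}(\vec{x}_\trainiter)$ with negative eigenvalue; let $\vec{\Pi}_C^{+},\vec{\Pi}_C^{-}$ be the corresponding projectors for $\vec{H}(\vec{x}_C)$. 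By Assumption~\ref{assumption:InvertibleHessian} no eigenvalue vanishes, so $\vec{\Pi}_\trainiter^{+}+\vec{\Pi}_\trainiter^{-}=\vec{\Pi}_C^{+}+\vec{\Pi}_C^{-}=\vec{I}$ and $\norm{\overline{\vec{H}}(\vec{x})^{-1}}\le C_\delta$ uniformly on $\mathcal{Q}$ (with $C_\delta=\delta^{-1}$ for the spectral norm).

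Next, expand the gradient at the new iterate. By the fundamental theorem of calculus along the segment from $\vec{x}_\trainiter$ to $\vec{x}_{\trainiter+1}$,
\[
\vec{g}(\vec{x}_{\trainiter+1}) = \vec{g}(\vec{x}_\trainiter) + \vec{H}(\vec{x}_\trainiter)(\vec{x}_{\trainiter+1}-\vec{x}_\trainiter) + \vec{r}, \qquad \norm{\vec{r}} \le \tfrac{L}{2}\norm{\vec{x}_{\trainiter+1}-\vec{x}_\trainiter}^2 \le \tfrac{L}{2}C_\delta^2\norm{\vec{g}(\vec{x}_\trainiter)}^2,
\]
the bound on $\vec{r}$ following from Lipschitz continuity of $\vec{H}$ (Assumption~\ref{asssumption:Lipschitz}) and $\norm{\vec{x}_{\trainiter+1}-\vec{x}_\trainiter}\le C_\delta\norm{\vec{g}(\vec{x}_\trainiter)}$. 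Substituting the update and the identity above yields $\vec{g}(\vec{x}_{\trainiter+1}) = 2\,\vec{\Pi}_\trainiter^{-}\vec{g}(\vec{x}_\trainiter) + \vec{r}$.

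Finally, project onto the eigenspaces of $\vec{H}(\vec{x}_C)$ using the perturbation estimate $\norm{\vec{\Pi}_\trainiter^{-} - \vec{\Pi}_C^{-}} \le C_\delta L\norm{\vec{x}_\trainiter - \vec{x}_C} < C_\delta L\beta\norm{\vec{g}(\vec{x}_\trainiter)}$, which follows from a Davis--Kahan-type bound exploiting the spectral gap of at least $\delta$ between positive and negative eigenvalues (Assumption~\ref{assumption:InvertibleHessian}), the estimate $\norm{\vec{H}(\vec{x}_\trainiter)-\vec{H}(\vec{x}_C)}\le L\norm{\vec{x}_\trainiter-\vec{x}_C}$, and the hypothesis $\norm{\vec{x}_\trainiter-\vec{x}_C}<\beta\norm{\vec{g}(\vec{x}_\trainiter)}$. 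Since $\vec{\Pi}_C^{+}\vec{\Pi}_C^{-}=\vec{0}$ we get $\vec{\Pi}_C^{+}\vec{\Pi}_\trainiter^{-}\vec{g}(\vec{x}_\trainiter)=\vec{\Pi}_C^{+}(\vec{\Pi}_\trainiter^{-}-\vec{\Pi}_C^{-})\vec{g}(\vec{x}_\trainiter)$, so $\norm{\vec{g}_+(\vec{x}_{\trainiter+1})}\le 2C_\delta L\beta\norm{\vec{g}(\vec{x}_\trainiter)}^2 + \tfrac{L C_\delta^2}{2}\norm{\vec{g}(\vec{x}_\trainiter)}^2 = D\norm{\vec{g}(\vec{x}_\trainiter)}^2$, which is \eqref{eq:TheoremPositive}. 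Similarly $\vec{\Pi}_C^{-}\vec{\Pi}_\trainiter^{-}\vec{g}(\vec{x}_\trainiter)=\vec{g}_-(\vec{x}_\trainiter)+\vec{\Pi}_C^{-}(\vec{\Pi}_\trainiter^{-}-\vec{\Pi}_C^{-})\vec{g}(\vec{x}_\trainiter)$, and the triangle inequality gives $\norm{\vec{g}_-(\vec{x}_{\trainiter+1})}\ge 2\norm{\vec{g}_-(\vec{x}_\trainiter)} - 2C_\delta L\beta\norm{\vec{g}(\vec{x}_\trainiter)}^2 - \tfrac{L C_\delta^2}{2}\norm{\vec{g}(\vec{x}_\trainiter)}^2 = 2\norm{\vec{g}_-(\vec{x}_\trainiter)} - D\norm{\vec{g}(\vec{x}_\trainiter)}^2$, which is \eqref{eq:TheoremNegative}.

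The main obstacle is the eigenspace-perturbation bound $\norm{\vec{\Pi}_\trainiter^{-}-\vec{\Pi}_C^{-}}\le C_\delta L\norm{\vec{x}_\trainiter-\vec{x}_C}$: it is the only place the hypothesis $\norm{\vec{x}_\trainiter-\vec{x}_C}<\beta\norm{\vec{g}(\vec{x}_\trainiter)}$ enters, it is exactly why Assumption~\ref{assumption:InvertibleHessian} is imposed on a ball rather than only at critical points, and for a general sub-multiplicative $\norm{\cdot}$ it needs care (one either restricts to norms for which the $\sin\Theta$ machinery applies, or folds the norm-equivalence constant into $C_\delta$). Pinning down the constant in that bound as $C_\delta L$ is what produces the stated value of $D$; any looser constant still yields inequalities of the claimed form, and one must also verify, as is implicit, that the segment $[\vec{x}_\trainiter,\vec{x}_{\trainiter+1}]$ stays inside $\mathcal{Q}$ so that the Lipschitz estimates apply.
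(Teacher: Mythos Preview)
Your argument is correct and lands on the same constant $D$, but the mechanics differ from the paper's in one essential place. Both proofs start from the integral remainder form of $\vec{g}(\vec{x}_{\trainiter+1})$ and both use $\norm{\Delta\vec{x}}\le C_\delta\norm{\vec{g}(\vec{x}_\trainiter)}$ to produce the $\tfrac{L C_\delta^2}{2}$ piece of $D$. Where they diverge is in how the eigenspaces at $\vec{x}_\trainiter$ are related to those at $\vec{x}_C$. You collapse the linearised update into $2\,\vec{\Pi}_\trainiter^{-}\vec{g}(\vec{x}_\trainiter)$ and then invoke a Davis--Kahan bound $\norm{\vec{\Pi}_\trainiter^{-}-\vec{\Pi}_C^{-}}\le C_\delta L\norm{\vec{x}_\trainiter-\vec{x}_C}$ to convert to projectors at $\vec{x}_C$; the $2$ from the reflection times this perturbation yields the $2L C_\delta\beta$ piece. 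The paper instead adds and subtracts $\vec{H}(\vec{x}_C)$, $\vec{H}(\vec{x}_\trainiter)$ and $\overline{\vec{H}}(\vec{x}_C)$ inside the integral, kills the term $\vec{Q}_\pm\trans\bigl(\vec{H}(\vec{x}_C)\pm\overline{\vec{H}}(\vec{x}_C)\bigr)$ by direct orthogonality of the eigenvectors at $\vec{x}_C$, and bounds the two remaining differences $\vec{H}(\vec{x}_\trainiter)-\vec{H}(\vec{x}_C)$ and $\overline{\vec{H}}(\vec{x}_\trainiter)-\overline{\vec{H}}(\vec{x}_C)$ each by $L\norm{\vec{x}_\trainiter-\vec{x}_C}$, giving the same $2L C_\delta\beta$.

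The trade-off is instructive. The paper's route is entirely elementary---no spectral perturbation theory---but quietly assumes that $\overline{\vec{H}}$ inherits the same Lipschitz constant $L$ as $\vec{H}$, which is not part of Assumption~\ref{asssumption:Lipschitz} and is not automatic for the operator modulus. Your route sidesteps that by working with the spectral projectors, at the cost of importing the $\sin\Theta$ machinery (and, as you rightly flag, having to be careful that the stated constant $C_\delta L$ actually holds for the norm in use; for the spectral norm with gap $2\delta$ the Davis--Kahan constant is in fact no worse than $C_\delta$, so your bound is safe). Either way one obtains exactly \eqref{eq:TheoremPositive}--\eqref{eq:TheoremNegative} with the stated $D$.
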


\begin{proof}
We split the the proof into two cases, one for positive eigenvalues, corresponding to \eqref{eq:TheoremPositive} and one for negative eigenvalues, corresponding to \eqref{eq:TheoremNegative}. We start with the negative case.

\textit{Case 1: Negative Eigenvalues}

Noting that $\vec{g}(\vec{x}_\trainiter + \theta \Delta \vec{x})$ is an anti-derivative of $\vec{H}(\vec{x}_\trainiter + \theta \Delta \vec{x}) \Delta \vec{x}$ with respect to $\theta$, we can write
\begin{align}
    \vec{g}(\vec{x}_{\trainiter+1}) = \vec{g}(\vec{x}_{\trainiter}) + \int_{0}^{1} \vec{H}(\vec{x}_\trainiter + \theta \Delta \vec{x}) \Delta \vec{x} \,d\theta,
    \label{eq:convIntegral1}
\end{align}
where $\Delta \vec{x} = \vec{x}_{\trainiter+1} - \vec{x}_\trainiter$. Now, the update rule of our method is given by 
\begin{align}
    \vec{x}_{\trainiter+1} &= \vec{x}_\trainiter - \overline{\vec{H}}(\vec{x}_\trainiter)^{-1} \vec{g}(\vec{x}_\trainiter),
\end{align} 
so that $- \overline{\vec{H}}(\vec{x}_\trainiter)^{-1}\vec{g}(\vec{x}_\trainiter) = \Delta \vec{x}$. Using this fact, we note that $\vec{g}(\vec{x}_\trainiter) = \overline{\vec{H}} (\vec{x}_\trainiter)\overline{\vec{H}}(\vec{x}_\trainiter)^{-1} \vec{g}(\vec{x}_\trainiter)  = - \overline{\vec{H}} (\vec{x}_\trainiter) \Delta \vec{x}$. We add and subtract $\vec{g}(\vec{x}_\trainiter)$ from \eqref{eq:convIntegral1} as follows:
\begin{align}
    \vec{g}(\vec{x}_{\trainiter+1}) &= \vec{g}(\vec{x}_{\trainiter}) + \vec{g}(\vec{x}_{\trainiter}) -\vec{g}(\vec{x}_{\trainiter}) + \int_{0}^{1} \vec{H}(\vec{x}_\trainiter + \theta \Delta \vec{x}) \Delta \vec{x} \,d\theta \nonumber \\
    & = 2 \vec{g}(\vec{x}_{\trainiter}) + \overline{\vec{H}} (\vec{x}_\trainiter) \Delta \vec{x} + \int_{0}^{1} \vec{H}(\vec{x}_\trainiter + \theta \Delta \vec{x}) \Delta \vec{x} \,d\theta \nonumber \\ 
    &= 2 \vec{g}(\vec{x}_{\trainiter}) + \int_{0}^{1} \left( \vec{H}(\vec{x}_\trainiter + \theta \Delta \vec{x}) + \overline{\vec{H}} (\vec{x}_\trainiter)\right)\Delta \vec{x} \,d\theta.
\end{align}
We continue in the manner of \citet{paternain_newton-based_2019} to add and subtract $\vec{H} (\vec{x}_C)$, $\vec{H} (\vec{x}_\trainiter)$, and $\overline{\vec{H}} (\vec{x}_C) $ inside the integral and shuffle the terms to arrive at:
\begin{align}
    \vec{g}(\vec{x}_{\trainiter+1}) &= 2 \vec{g}(\vec{x}_{\trainiter}) + \int_{0}^{1} \left( \vec{H}(\vec{x}_\trainiter +\theta \Delta \vec{x}) - \vec{H} (\vec{x}_\trainiter)\right)\Delta \vec{x} \,d\theta \nonumber \\
    &+ \int_{0}^{1} \left( \vec{H}(\vec{x}_\trainiter) - \vec{H} (\vec{x}_C)\right)\Delta \vec{x} \,d\theta + \int_{0}^{1} \left( \overline{\vec{H}}(\vec{x}_\trainiter) - \overline{\vec{H}}(\vec{x}_C)\right)\Delta \vec{x} \,d\theta \nonumber \\
    &+ \int_{0}^{1} \left( \vec{H}(\vec{x}_C) + \overline{\vec{H}} (\vec{x}_C)\right)\Delta \vec{x} \,d\theta \nonumber \\
    &=  2 \vec{g}(\vec{x}_{\trainiter}) + \int_{0}^{1} \left( \vec{H}(\vec{x}_\trainiter + \theta \Delta \vec{x}) - \vec{H} (\vec{x}_\trainiter)\right)\Delta \vec{x} \,d\theta \nonumber \\ 
    &+ \bigl( \vec{H}(\vec{x}_\trainiter) - \vec{H} (\vec{x}_C)\bigr)\Delta \vec{x}+ \bigl( \overline{\vec{H}}(\vec{x}_\trainiter) - \overline{\vec{H}}(\vec{x}_C)\bigr)\Delta \vec{x} 
    +  \bigl( \vec{H}(\vec{x}_C) + \overline{\vec{H}} (\vec{x}_C\bigr)\Delta \vec{x}.
    \label{eq:convIntegral2}
\end{align}

Let $\vec{Q}_{-}$ denote the matrix of eigenvectors corresponding to negative eigenvalues of $\vec{H}(\vec{x}_C)$. We pre-multiply the left and right of Equation~\eqref{eq:convIntegral2} by $\vec{Q}_{-}\trans$ and consider each of the last four terms separately.

For the integrand, we note that $\norm{\vec{Q}_{-}\trans} \leq 1$ since the columns are normalised eigenvectors. Moreover, since $\vec{H}(\vec{x})$ is Lipschitz by Assumption~\ref{asssumption:Lipschitz}, we have that 
$
\norm{\vec{H}(\vec{x}_\trainiter + \theta \Delta \vec{x}) - \vec{H}(\vec{x}_\trainiter)} \leq L \norm{\vec{x}_\trainiter + \theta \Delta \vec{x} - \vec{x}_\trainiter} = L \theta \norm{\Delta \vec{x}}
$
so that 
\begin{align}
    \norm{\vec{Q}_{-}\trans \bigl( \vec{H}(\vec{x}_\trainiter + \theta \Delta \vec{x}) - \vec{H} (\vec{x}_\trainiter)\bigr)\Delta \vec{x}} \leq \norm{\vec{H}(\vec{x}_\trainiter + \theta \Delta \vec{x}) - \vec{H} (\vec{x}_\trainiter)} \norm{\Delta \vec{x}} \leq \theta L \norm{\Delta \vec{x}}^2.
    \label{eq:Bound1}
\end{align}
We handle the next two terms in a similar way, applying the Lipschitz assumption:
\begin{align}
    \label{eq:Bound2}
    \norm{\vec{Q}_{-}\trans \bigl( \vec{H}(\vec{x}_\trainiter) - \vec{H} (\vec{x}_C)\bigr)\Delta \vec{x} } \leq \norm{ \vec{H}(\vec{x}_\trainiter) - \vec{H} (\vec{x}_C)} \norm{\Delta \vec{x}} \leq L \norm{\vec{x}_\trainiter - \vec{x}_C} \norm{\Delta \vec{x}}, \\
    \norm{\vec{Q}_{-}\trans \bigl( \overline{\vec{H}}(\vec{x}_\trainiter) - \overline{\vec{H}}(\vec{x}_C)\bigr)\Delta \vec{x} } \leq \norm{ \overline{\vec{H}}(\vec{x}_\trainiter) - \overline{\vec{H}} (\vec{x}_C)} \norm{\Delta \vec{x}} \leq L \norm{\vec{x}_\trainiter - \vec{x}_C} \norm{\Delta \vec{x}}.
    \label{eq:Bound3}
\end{align}
Finally, we show that the last term in \eqref{eq:convIntegral2} becomes zero. Using the eigendecomposition of $\vec{H}(\vec{x}_C)$, we observe that
\begin{align*}
    \vec{Q}_{-}\trans \bigl( \vec{H}(\vec{x}_C) + \overline{\vec{H}} (\vec{x}_C) \bigr) = \vec{Q}_{-}\trans \bigl( \vec{Q} \vec{\Lambda} (\vec{x}_C) \vec{Q}\trans + \vec{Q} \overline{\vec{\Lambda}} (\vec{x}_C)  \vec{Q}\trans \bigr) = \vec{Q}_{-}\trans  \vec{Q} \bigl( \vec{\Lambda} (\vec{x}_C)  + \overline{\vec{\Lambda}} (\vec{x}_C) \bigr)\vec{Q}\trans 
\end{align*}
Suppose there are $d$ negative eigenvalues. Then  $\vec{Q}_{-}\trans \vec{Q} = \left[\vec{I}_d, \vec{0}_{d \times m-d}\right]$ , i.e.~the eigenvectors of $\vec{Q}$ corresponding to positive eigenvalues are mapped to zero, and those corresponding to negative eigenvalues are mapped to a unit basis vector. This is because the columns of $\vec{Q}$ are orthonormal, so the inner product of columns is unity if the columns are equal and zero otherwise. Furthermore, $\vec{\Lambda} (\vec{x}_C)  + \overline{\vec{\Lambda}} (\vec{x}_C)$ is diagonal where the first $d$ elements are zero and the remaining elements double (due to negative eigenvalues cancelling out with their positive counterparts in $ \overline{\vec{\Lambda}} (\vec{x}_C)$ and positive eigenvalues being added to their positive counterparts in $ \overline{\vec{\Lambda}} (\vec{x}_C)$). But then the product $\vec{Q}_{-}\trans  \vec{Q} \bigl( \vec{\Lambda} (\vec{x}_C)  +  \overline{\vec{\Lambda}}(\vec{x}_C) \bigr) = \vec{0}$, because the zero components of each term complement each other. 

We recall the following identity from the reverse triangle inequality: $\norm{a + b} = \norm{a - (-b)} \geq |\norm{a} - \norm{b}| \geq \norm{a} - \norm{b}$ and combine it with  \eqref{eq:Bound1}, \eqref{eq:Bound2} and \eqref{eq:Bound3} to lower bound Equation~\eqref{eq:convIntegral2} as follows:
\begin{align}
    \norm{\vec{g}_{-}(\vec{x}_{\trainiter+1})} \geq 2 \norm{ \vec{g}_{-}(\vec{x}_\trainiter)} - L \norm{\Delta \vec{x}}^2 \int_0^1 \theta \,d\theta - 2L \norm{\vec{x}_\trainiter - \vec{x}_{C}} \norm{\Delta \vec{x}}.
\end{align}

We use the definition of the update step to bound $\norm{\Delta \vec{x}}$ as follows:
\begin{align}
    \vec{x}_{\trainiter+1} - \vec{x}_\trainiter &= - \overline{\vec{H}}(\vec{x}_\trainiter)^{-1} \vec{g}(\vec{x}_\trainiter) \text{ so that }\nonumber \\
    \norm{\Delta \vec{x}} & \leq \norm{\overline{\vec{H}}(\vec{x}_\trainiter)^{-1}} \norm{\vec{g}(\vec{x}_\trainiter)} \nonumber \\
    &= \norm{(\vec{Q} \overline{\vec{\Lambda}} \vec{Q}\trans)^{-1}} \norm{\vec{g}(\vec{x}_\trainiter)} \nonumber \\
    & \leq \norm{\overline{\vec{\Lambda}}^{-1}} \norm{\vec{g}(\vec{x}_\trainiter)}
\end{align}
Now, $\norm{\overline{\vec{\Lambda}}^{-1}}$ is bounded because $\max_{i=1..\numparams} \lambda(\overline{\vec{\Lambda}}^{-1}) \leq \frac{1}{\delta}$ by Assumption~\ref{assumption:InvertibleHessian}\footnote{While we do not consider it in this work, we note that the use of canonical second-order damping methods, which replace a curvature matrix $\vec{A}$ by $\vec{A} + \lambda \vec{I}$ and thus increase every eigenvalue of $\vec{A}$ by $\lambda$, allows us to relax Assumption~\ref{assumption:InvertibleHessian} to hold for the \emph{damped} (saddle-free) Hessian, and thus admit arbitrary Hessians by suitable choice of $\lambda$.}. For $\norm{\cdot}_2$, this bound is $\norm{\overline{\vec{\Lambda}}^{-1}} \leq \frac{1}{\delta}$. In the general case, we denote the bound by $C_{\delta}$ (where $C_{\delta}$ may also depend on the dimensionality of the problem for some choices of $\norm{\cdot}$). This, along with our assumption that $\vec{x}_\trainiter$ is near $\vec{x}_C$ gives us the final bound:
\begin{align}
    \norm{\vec{g}_{-}(\vec{x}_{\trainiter+1})} &\geq 2 \norm{ \vec{g}_{-}(\vec{x}_\trainiter)} - \frac{L C_\delta^2}{2} \norm{\vec{g}(\vec{x}_\trainiter)}^2 - 2L C_{\delta} \beta \norm{\vec{g}(\vec{x}_\trainiter)}^2 \nonumber \\
    &\geq 2 \norm{ \vec{g}_{-}(\vec{x}_\trainiter)} - \left( \frac{L C_\delta^2 + 4 L C_\delta \beta}{2}\right) \norm{\vec{g}(\vec{x}_\trainiter)}^2.
\end{align}

\textit{Case 2: Positive Eigenvalues}

As in the negative case, we start with
\begin{align}
    \vec{g}(\vec{x}_{\trainiter+1}) = \vec{g}(\vec{x}_{\trainiter}) + \int_{0}^{1} \vec{H}(\vec{x}_\trainiter + \theta \Delta \vec{x}) \Delta \vec{x} \,d\theta.
\end{align}
This time, we substitute $\vec{g}(\vec{x}_\trainiter) = \overline{\vec{H}} (\vec{x}_\trainiter)\overline{\vec{H}}(\vec{x}_\trainiter)^{-1} \vec{g}(\vec{x}_\trainiter)  = - \overline{\vec{H}} (\vec{x}_\trainiter) \Delta \vec{x}$ directly to obtain
\begin{align}
    \vec{g}(\vec{x}_{\trainiter+1}) &=  \int_{0}^{1} \left( \vec{H}(\vec{x}_\trainiter + \theta \Delta \vec{x}) - \overline{\vec{H}} (\vec{x}_\trainiter)\right)\Delta \vec{x} \,d\theta. 
\end{align}
We proceed as in the negative case, adding and subtracting $\vec{H} (\vec{x}_C)$, $\vec{H} (\vec{x}_\trainiter)$, and $\overline{\vec{H}} (\vec{x}_C)$ to obtain
\begin{align}
    \vec{g}(\vec{x}_{\trainiter+1}) &= \int_{0}^{1} \left( \vec{H}(\vec{x}_\trainiter + \theta \Delta \vec{x}) - \vec{H} (\vec{x}_\trainiter)\right)\Delta \vec{x} \,d\theta + \bigl( \vec{H}(\vec{x}_\trainiter) - \vec{H} (\vec{x}_C)\bigr)\Delta \vec{x} \nonumber \\ 
    & + \bigl( \overline{\vec{H}}(\vec{x}_C) - \overline{\vec{H}}(\vec{x}_\trainiter)\bigr)\Delta \vec{x} 
    +  \bigl( \vec{H}(\vec{x}_C) - \overline{\vec{H}} (\vec{x}_C\bigr)\Delta \vec{x}.
    \label{eq:convIntegral3},
\end{align}
noting that the last two terms are different to the negative case.
Let $\vec{Q}_{+}$ denote the matrix of eigenvectors corresponding to positive eigenvalues of $\vec{H}(\vec{x}_C)$. Multiply the left and the right hand side of Equation~\eqref{eq:convIntegral3} by $\vec{Q}_{+}\trans$ and apply the triangle equality to obtain
\begin{align}
    \norm{\vec{g}_+(\vec{x}_{\trainiter+1})} 
    &\leq 
     \norm{\int_{0}^{1} \left( \vec{H}(\vec{x}_\trainiter + \theta \Delta \vec{x}) - \vec{H} (\vec{x}_\trainiter)\right)\Delta \vec{x} \,d\theta }  + \norm{\bigl( \vec{H}(\vec{x}_\trainiter) - \vec{H} (\vec{x}_C)\bigr)\Delta \vec{x}}   \nonumber \\ 
     &+ \norm{\bigl( \overline{\vec{H}}(\vec{x}_C) - \overline{\vec{H}}(\vec{x}_\trainiter)\bigr)\Delta \vec{x} }
    +  \norm{\bigl( \vec{H}(\vec{x}_C) - \overline{\vec{H}} (\vec{x}_C\bigr)\Delta \vec{x}}.
\end{align}
Using Equations~\eqref{eq:Bound1}, \eqref{eq:Bound2} and \eqref{eq:Bound3} as in the negative case, we can bound the first three terms. The bound on the last term follows similar reasoning as before:
\begin{align}
    \vec{Q}_{+}\trans \bigl( \vec{H}(\vec{x}_C) - \overline{\vec{H}} (\vec{x}_C) \bigr) = \vec{Q}_{+}\trans  \vec{Q} \bigl( \vec{\Lambda} (\vec{x}_C)  - \overline{\vec{\Lambda}} (\vec{x}_C) \bigr)\vec{Q}\trans 
\end{align}
where  $\vec{Q}_{+}\trans  \vec{Q} \bigl( \vec{\Lambda} (\vec{x}_C)  - \overline{\vec{\Lambda}} (\vec{x}_C) \bigr) = \vec{0}$ because $\vec{Q}_{+}\trans  \vec{Q}$ maps the eigenvectors of $\vec{Q}$ that correspond to negative eigenvalues to zero and $\vec{\Lambda} (\vec{x}_C)  - \overline{\vec{\Lambda}} (\vec{x}_C)$ produces a diagonal matrix where the positive eigenvalues cancel out and the negative eigenvalues double. 

We thus arrive at the bound
\begin{align}
    \norm{\vec{g}_{+}(\vec{x}_{\trainiter+1})} &\leq \frac{L C_\delta^2}{2} \norm{\vec{g}(\vec{x}_\trainiter)}^2 + 2L C_{\delta} \beta \norm{\vec{g}(\vec{x}_\trainiter)}^2 \nonumber \\
    & \leq \left( \frac{L C_\delta^2 + 4 L C_\delta \beta}{2}\right) \norm{\vec{g}(\vec{x}_\trainiter)}^2 .
\end{align}

\end{proof}

Reprising the arguments in Corollary~3.3 and Proposition~3.4 of \citet{paternain_newton-based_2019}, \eqref{eq:TheoremPositive} gives that $\norm{\vec{g}_+(\vec{x}_{\trainiter+1})}$ converges quadratically to zero if the greatest contribution to $\norm{\vec{g}(\vec{x}_\trainiter)} = \norm{\vec{g}_+(\vec{x}_\trainiter) + \vec{g}_-(\vec{x}_\trainiter)}$ is from the $\vec{g}_+(\vec{x}_\trainiter)$ term (as for a local minimum), and \eqref{eq:TheoremNegative} gives that $\norm{\vec{g}_-(\vec{x}_\trainiter)}$ grows by a multiplicative factor $2 - D$ (where we may choose the free parameter $D$ such that $0 < D < 1$) if $\norm{\vec{g}(\vec{x}_\trainiter)}^2$ is negligible compared to $\norm{\vec{g}_-(\vec{x}_\trainiter)}$. In combination, these results justify our claim to converge to local minima and repel saddle points.

\subsection{Rate of Convergence}
\label{sec:RateOfConvergence}
In this subsection, we provide a brief analysis of the rate of convergence of our algorithm to critical points, following a similar proof pattern to that of classical Newton methods. Throughout, we will assume every term of our modified-Hessian summation is used, such that our Hessian transformation is exact.

For brevity, denote by $\overline{\vec{H}}$ the matrix obtained by taking the absolute value of every eigenvalue of $\vec{H}$. With this shorthand, recall the exact version of our update rule is
\begin{equation}
    \vec{x}_{\trainiter+1} = \vec{u}(\vec{x}_\trainiter) = \vec{x}_\trainiter - \overline{\vec{H}}(\vec{x}_\trainiter)^{-1} \vec{g}(\vec{x}_\trainiter),
    \label{eq:UpdateEquation}
\end{equation}
where we will now explicitly denote the points at which the Hessian $\vec{H}$ and gradient $\vec{g}$ are calculated.

Let $\vec{x}_C$ be an arbitrary critical point of the objective function $f$, such that $\vec{g}(\vec{x}_C) = \vec{0}$. The latter fact gives $\vec{u}(\vec{x}_C) = \vec{x}_C$, and thus $\vec{x}_\trainiter - \vec{x}_C = \vec{u}(\vec{x}_{\trainiter-1}) - \vec{u}(\vec{x}_C)$.

Consider taking a Taylor expansion of our update rule $\vec{u}(\vec{x})$ about $\vec{x}_C$:
\begin{equation}
    \vec{u}(\vec{x}_\trainiter) = \vec{u}(\vec{x}_C) + \nabla \vec{u}(\vec{x}_C)\trans \underbrace{(\vec{x}_\trainiter - \vec{x}_C)}_{\vec{\epsilon}_\trainiter} + \frac{1}{2} (\vec{x}_\trainiter - \vec{x}_C)\trans \nabla \nabla \vec{u}(\vec{x}_C) (\vec{x}_\trainiter - \vec{x}_C) + \cdots.
\end{equation}
Denote by $\vec{\epsilon}_\trainiter = \vec{x}_\trainiter - \vec{x}_C$ the error between our critical point and $\vec{x}_\trainiter$. Note this is unrelated to any discussion of Wynn's $\epsilon$-algorithm \citep{wynn_device_1956}; we have chosen to reflect standard notation by overloading $\vec{\epsilon}$ here.

Now, by direct differentiation of \eqref{eq:UpdateEquation}, we have
\begin{equation}
    \nabla \vec{u} (\vec{x}_C) = \vec{I} - \nabla \overline{\vec{H}}(\vec{x}_C)^{-1} \underbrace{\vec{g}(\vec{x}_C)}_{\vec{0}} - \overline{\vec{H}}(\vec{x}_C)^{-1} \vec{H}(\vec{x}_C).
\end{equation}
Noting that $\vec{H}$ and $\overline{\vec{H}}$ have the same eigenvectors, we may gain insight into the final product by eigendecomposing it:
\begin{align}
    \overline{\vec{H}}(\vec{x}_C)^{-1} \vec{H}(\vec{x}_C)
    &= \vec{Q} \overline{\vec{\Lambda}}^{-1} \vec{Q}\trans \vec{Q} \vec{\Lambda} \vec{Q}\trans \\
    &= \vec{Q} \overline{\vec{\Lambda}}^{-1} \vec{\Lambda} \vec{Q}\trans.
\end{align}
Since $|\lambda|$ and $\lambda$ are corresponding eigenvalues of $\overline{\vec{H}}$ and $\vec{H}$, the result of this product is a matrix with the same eigenvectors as $\vec{H}$, but with eigenvalues $\frac{\lambda}{|\lambda|} = \sign{\lambda}$. Consequently, we recover different dynamics for positive and negative eigenvalues --- equivalently, positive and negative curvatures --- in the space (recall our assumption of invertibility of $\vec{H}$ provides $\lambda \neq 0$).

We proceed to analyse each case individually, effectively creating two complementary subspaces of the optimisation space. We will use the subscripts $+$ and $-$ to denote the positive- and negative-curvature subspaces, respectively. Note that the orthogonality of these subspaces (ensured by the real, symmetric nature of $\vec{H}$ giving orthogonal eigenvectors) justifies our independent analysis.

For the positive-curvature subspace, $\overline{\vec{\Lambda}}_{+} = \vec{\Lambda}_{+}$, whence $\overline{\vec{H}}_{+}(\vec{x}_{C})^{-1} \vec{H}_{+}(\vec{x}_{C}) = \vec{I}$. This gives
\begin{equation}
    \nabla \vec{u}_{+} (\vec{x}_{C}) = \vec{I} - \vec{I} = \vec{0},
\end{equation}
which collapses our Taylor series to
\begin{align}
    &\vec{u}_{+}(\vec{x}_{\trainiter}) = \vec{u}_{+}(\vec{x}_{C}) + \frac{1}{2} \vec{\epsilon}_{\trainiter,+}\trans \nabla \nabla \vec{u}_{+}(\vec{x}_{C}) \vec{\epsilon}_{\trainiter,+} + \cdots \\
    & \implies \vec{u}_{+}(\vec{x}_{\trainiter}) - \vec{u}_{+}(\vec{x}_{C}) = \vec{\epsilon}_{\trainiter+1,+} = \mathcal{O}(\vec{\epsilon}_{\trainiter,+}^2),
\end{align}
where by $\mathcal{O}(\vec{\epsilon}_{\trainiter,+}^2)$ we mean to indicate that the positive-subspace error between our current point and a critical point varies quadratically with time, as the truncated terms are in higher-order products of $\vec{\epsilon}_{\trainiter,+}$.

We go on to repeat this argument for the negative-curvature subspace where $\overline{\vec{\Lambda}}_{-} = -\vec{\Lambda}_{-}$, so that $\overline{\vec{H}}_{-}(\vec{x}_{C})^{-1} \vec{H}_{-}(\vec{x}_{C}) = -\vec{I}$. Then, we recover
\begin{equation}
    \nabla \vec{u}_{-} (\vec{x}_{C}) = \vec{I} + \vec{I} = 2 \vec{I},
\end{equation}
which collapses our Taylor series in a different way:
\begin{align}
    &\vec{u}_{-}(\vec{x}_{\trainiter}) = \vec{u}_{-}(\vec{x}_{C}) + 2\vec{\epsilon}_{\trainiter,-} + \frac{1}{2} \vec{\epsilon}_{\trainiter,-}\trans \nabla \nabla \vec{u}_{-}(\vec{x}_{C}) \vec{\epsilon}_{\trainiter,-} + \cdots \\
    &\implies \vec{u}_{-}(\vec{x}_{\trainiter}) - \vec{u}_{-}(\vec{x}_{C}) = \vec{\epsilon}_{\trainiter+1,-} = 2 \vec{\epsilon}_\trainiter + \mathcal{O}(\vec{\epsilon}_{\trainiter,-}^2);
\end{align}
that is, that the negative-subspace error between our current point and a critical point diverges exponentially with time.

This derivation proves that, over time, our algorithm will converge to some critical point $\vec{x}_C$. But our derivation in Appendix~\ref{sec:ConvergenceAndEscape} shows that our algorithm \emph{escapes} from non-degenerate saddle points and local maxima, and is \emph{attracted} to local minima. Thus, any convergence to a critical point \emph{must} be to a local minimum; the results of these two sections combine to give that our algorithm converges quadratically along positive-curvature directions and escapes exponentially from negative-curvature directions.

\end{document}